\def\eqref#1{equation~\ref{#1}}
\def\1{\bm{1}}
\DeclareMathAlphabet{\mathsfit}{\encodingdefault}{\sfdefault}{m}{sl}
\SetMathAlphabet{\mathsfit}{bold}{\encodingdefault}{\sfdefault}{bx}{n}
\title{VAMO: Efficient Zeroth-Order Variance Reduction for SGD with Faster Convergence}
\author{
  Jiahe Chen, Ziye Ma\\
  Department of Computer Science\\
  City University of Hong Kong\\
  \texttt{cjiahe2-c@my.cityu.edu.hk, ziyema@cityu.edu.hk} \\
}
\newtheorem{assumption}{Assumption}
\newtheorem{theorem}{Theorem}
\newtheorem{lemma}{Lemma}
\newtheorem{proposition}{Proposition}
\newtheorem{corollary}{Corollary}
\begin{document}

\maketitle

\begin{abstract}
Optimizing large-scale nonconvex problems, common in deep learning, demands balancing rapid convergence with computational efficiency. First-order (FO) optimizers, which serve as today’s baselines, provide fast convergence and good generalization but often incur high computation and memory costs due to the large size of modern models. Conversely, zeroth-order (ZO) algorithms reduce this burden using estimated gradients, yet their slow convergence in high-dimensional settings limits practicality. We introduce VAMO (VAriance-reduced Mixed-gradient Optimizer), a stochastic variance-reduced method that extends mini-batch SGD with full-batch ZO gradients under an SVRG-style framework. VAMO's hybrid design utilizes a two-point ZO estimator to achieve a dimension-agnostic convergence rate of $\mathcal{O}(1/T + 1/b)$, where $T$ is the number of iterations and $b$ is the batch-size, surpassing the dimension-dependent slowdown of purely ZO methods and significantly improving over SGD's $\mathcal{O}(1/\sqrt{T})$ rate. Additionally, we propose a multi-point variant that mitigates the $O(1/b)$ error by adjusting the number of estimation points to balance convergence and cost. Importantly, VAMO achieves these gains with smaller dynamic memory requirements than many FO baselines, making it particularly attractive for edge deployment. Experiments including traditional neural network training and LLM finetuning confirm that VAMO not only outperforms established FO and ZO methods, but also does so with a light memory footprint.
\end{abstract}

\section{Introduction}
\label{sec:intro}
First-order (FO) optimization methods, particularly Stochastic Gradient Descent (SGD), have been applied in training a wide range of machine learning models. For large-scale problems, variance-reduced (VR) techniques, such as the Stochastic Variance Reduced Gradient (SVRG) algorithm \citep{johnson2013accelerating,allen2016improved, reddi2016stochastic}, offer significant improvements, achieving faster convergence rates, $O(1/T)$, compared to the rate of SGD $O(1/\sqrt{T})$ \citep{reddi2016stochastic}. However, in recent years, extremely large models—such as Large Language Models (LLMs) with billions of parameters—have become increasingly prevalent in machine learning. When training these models, traditional variance reduction methods like SVRG face a major challenge: they require periodically computing the full gradient over the entire dataset, which requires high cost for such large-scale models \citep{reddi2016stochastic}. For LLMs, this step results in prohibitive computational and memory overhead, severely hindering efficient training of large models.

Zeroth-order (ZO) optimization methods present an appealing alternative in this context, as they completely bypass the need for explicit gradient calculations, estimating gradients using only function value queries \citep{nesterov2017random, liu2018zeroth}. This gradient-free characteristic drastically reduces per-iteration computational cost and memory footprint, making ZO methods attractive for resource-constrained training of LLMs \citep{malladi2023fine, gautam2024variance}. Despite these advantages, ZO methods typically exhibit slower theoretical convergence rates than FO methods and, critically, often suffer from a strong dependence on the parameter dimension $d$ \citep{duchi2015optimal,nesterov2017random}. Given the vast dimensionality of modern LLMs, this dependence can render pure ZO approaches impractically slow. This creates a clear dilemma for large model training: FO methods offer desirable convergence, but suffer from high gradient costs, while ZO methods are cheaper per step but often too slow and scale poorly with the parameter dimension. This naturally raises the question: can we devise a hybrid strategy that combines the strengths of both FO and ZO methods, thereby overcoming their limitations for efficient training of large models? 

In this work, we propose \textbf{VAMO (VAriance-reduced Mixed-gradient Optimizer)}, a new adaptive hybrid algorithm specifically designed to navigate this dilemma in large-scale non-convex optimization. Our approach integrates FO and ZO techniques within the SVRG framework, aiming to maintain SVRG's fast convergence while substantially mitigating its computational burden. A major breakthrough here is the replacement of the prohibitively expensive full FO gradient $\nabla f(\hat{\mathbf{x}})$ at SVRG snapshots with an efficient ZO gradient estimate $\hat{\nabla}f(\hat{\mathbf{x}})$, which significantly reduces the computation. This leads to a convergence rate of $O(1/T + 1/b)$, significantly outperforming FO-SGD, matching the rate of FO-SVRG only with an additional error of $O(1/b)$. The adaptability of VAMO is then enhanced through several key innovations proposed in this work. First, we extend VAMO with a multi-point ZO gradient estimator for the snapshot gradient $\hat{\nabla}f(\hat{\mathbf{x}})$. Compared to the standard two-point variant, this extension reduces the additional $O(1/b)$ error term and allows for balancing performance and cost flexibly by varying the number of query directions. Second, we introduce a mixing coefficient $\alpha$ into the update rule. While incorporating the ZO full gradient can effectively reduce variance, it inevitably introduces estimation error; the coefficient $\alpha$ provides fine-grained control over this balance by adaptively weighting the FO stochastic gradient and the ZO correction term. Together, these mechanisms make VAMO highly adaptable across diverse optimization scenarios. Importantly, despite this hybrid design, our gradient estimator preserves the unbiasedness of FO-SVRG, distinguishing VAMO from many biased ZO methods and enabling a rigorous convergence analysis with stronger theoretical guarantees.

\section{Related Work}
\label{sec:related_work}

\paragraph{First-order optimization.} 
While Stochastic Gradient Descent (SGD) \citep{robbins1951stochastic} remains a foundational algorithm in machine learning, its convergence can be slow in large-scale settings due to high gradient variance \citep{johnson2013accelerating}. Addressing this, variance-reduced (VR) methods \citep{gower2020variance}, notably SVRG \citep{johnson2013accelerating, allen2016improved,reddi2016stochastic} and SAGA \citep{defazio2014saga}, represent a significant theoretical advancement. These algorithms reduce variance by leveraging gradients from past iterates or periodically computing full-batch gradients and achieve faster convergence rates (e.g., linear convergence under certain assumptions) compared to SGD \citep{johnson2013accelerating,reddi2016stochastic}. Despite these theoretical benefits, a primary practical limitation is the substantial computational and memory cost associated with full-batch gradients. This overhead can become prohibitive as model sizes and datasets scale. Another common extension of SGD is the use of adaptive step-sizes, as in Adagrad~\citep{duchi2011adaptive} and ADAM~\citep{kingma2014adam}. However, the convergence properties of these adaptive methods remain debated and can be highly sensitive to hyper-parameter choices~\citep{reddi2019convergence,defossez2020simple,zhang2022adam}. To provide a clearer and more interpretable comparison, we focus on standard baselines such as SGD and SVRG on our theoretical analysis, which better isolate the effects of our proposed modifications. 

\paragraph{Zeroth-order optimization.} 
Zeroth-order (ZO) optimization approximates gradients using function evaluations instead of explicit gradient computation, offering reduced computational and memory overhead \citep{zhang2024revisiting}. This advantage makes them attractive for large-scale problems like fine-tuning of large language models \citep{malladi2023fine, gautam2024variance, zhang2024revisiting, tang2024private,ling2024convergence}, and their convergence properties are theoretically studied \citep{duchi2015optimal, jamieson2012query,nesterov2017random, liu2018zeroth}. However, the convergence rates of ZO methods often degrade with increasing parameter dimension $d$, which makes them slow for large models compared to FO methods \citep{liu2018zeroth, liu2018stochastic,wang2018stochastic}. Even recent applications like MeZO \citep{malladi2023fine} and MeZO-SVRG \citep{gautam2024variance} are constrained by large parameter dimension. While ZO optimization is crucial for black-box scenarios \citep{chen2017zoo, tu2019autozoom}, tasks like fine-tuning often have accessible gradients whose computation is merely expensive. This motivates our hybrid approach, which aims to combine ZO's efficiency with FO's faster convergence by strategically incorporating both types of gradient information, thereby avoiding the high computational cost associated with FO methods, while also mitigating the performance degradation that ZO methods often suffer in high-dimensional settings.

\paragraph{Hybrid Zeroth-Order and First-Order Algorithms.} 
Combining the strengths of FO and ZO optimization is a relatively recent and underexplored direction, with limited established theoretical analysis. The goal is to enjoy faster convergence while using less computational resource via ZO techniques \citep{zhang2024revisiting, li2024addax}. Early explorations include schemes like applying ZO to shallower model layers and FO to deeper ones \citep{zhang2024revisiting}, or concurrently computing and combining FO-SGD and ZO-SGD updates at each step, as in Addax \citep{li2024addax}. However, these initial hybrid approaches may have limitations; for instance, the theoretical convergence rate of Addax still exhibits dependence on the problem dimension $d$ \citep{li2024addax}, hindering its effectiveness for large-scale models. Furthermore, many early hybrid strategies often lack explicit mechanisms to adaptively tune the balance between FO accuracy and ZO query efficiency in response to varying computational resources or specific problem demands. The scarcity of hybrid strategies that offer both theoretical convergence and controlled adaptability underscores the novelty of our work.
To contextualize our contributions, Table~\ref{table:convergence_summary} summarizes the convergence rates and computational complexities of our proposed methods, referred to as VAMO and VAMO (multi-point) in the table alongside several FO and ZO algorithms. We provide a detailed explanation of Table~\ref{table:convergence_summary} in Appendix~\ref{appendix:discussion_complexity}.

\begin{table}[htbp]
    \caption{Summary of convergence rate and computational complexity of our proposals given $T$ total iterations. 
    $n$ represents the total number of samples or component functions, $d$ is the parameter dimension, $b$ denotes the mini-batch size, $S$ is the number of epochs or outer loops (for SVRG-type methods, $T \approx Sm$ where $m$ is the number of inner iterations per epoch), and $q$ signifies the number of query directions used for ZO estimation.}
    \label{table:convergence_summary}
    \small
    \centering
    \renewcommand{\arraystretch}{1.2}
    \setlength{\tabcolsep}{4pt}
    \begin{tabularx}{\textwidth}{@{}p{0.16\textwidth} p{0.16\textwidth} p{0.13\textwidth} X X@{}}
        \toprule
        \textbf{Method} & \textbf{Grad. estimator} & \textbf{Stepsize} & \textbf{Convergence rate} (worst case as $b < n$) & \textbf{Computational complexity} \\
        \midrule
        ZO-SVRG & Gradient Estimate & $O(1/d)$ & $O(d/T + 1/b)$ & $O(nS + bT)$ \\
        FO-SGD & Explicit Gradient & $O(1/\sqrt{T})$ & $O(1/\sqrt{T})$ & $O(bdT)$ \\
        FO-SVRG & Explicit Gradient  & $O(1)$ & $O(1/T)$ & $O(dnS + bdT)$ \\
        VAMO & Mixed Gradient & $O(1)$ & $O(1/T + 1/b)$ & $O(nS + bdT)$ \\
        VAMO(multi-point) & Mixed Gradient & $O(1)$ & $O(1/T + (1 - q/d)^2 / b)$ & $O(qnS + bdT)$ \\
        \bottomrule
    \end{tabularx}
\end{table}

\section{Preliminaries}
We consider the following nonconvex finite-sum optimization problem:
\begin{equation}
    \min_{\mathbf{x} \in \mathbb{R}^d} \quad f(\mathbf{x}) := \frac{1}{n} \sum_{i=1}^n f_i(\mathbf{x}),
    \label{eq:opti}
\end{equation}
where $\{f_i(\mathbf{x})\}_{i=1}^n$ are $n$ individual nonconvex cost functions. Note that ~\eqref{eq:opti} is the generic form of many machine learning problems such as training neural networks, since this is the natural form arising from empirical risk minimization (ERM). Next we introduce assumptions we will make throughout the paper and provide the background of ZO gradient estimate.

\subsection{Assumptions}
\label{sec:assumptions}
Throughout this paper, we make the following standard assumptions on the objective function components $f_i(\mathbf{x})$. Let $d$ be the dimension of the optimization variable $\mathbf{x}$. 

\begin{assumption}[L-smooth]
\label{assump:smoothness}
Each function $f_i: \mathbb{R}^d \to \mathbb{R}$ is $L$-smooth for $i \in [n] := \{1, 2, \ldots, n\}$. That is, for any $\mathbf{x}, \mathbf{y} \in \mathbb{R}^d$, there exists a constant $L > 0$ such that:
$$ \|\nabla f_i(\mathbf{x}) - \nabla f_i(\mathbf{y})\|_2 \leq L\|\mathbf{x} - \mathbf{y}\|_2 $$
This also implies that the full objective function $f(\mathbf{x}) = \frac{1}{n}\sum_{i=1}^n f_i(\mathbf{x})$ is $L$-smooth.
\end{assumption}

\begin{assumption}[Bounded Variance]
\label{assump:variance}
The variance of the stochastic gradients is bounded. Specifically, for any $\mathbf{x} \in \mathbb{R}^d$, there exists a constant $\sigma^2 \geq 0$ such that:
$$ \frac{1}{n}\sum_{i=1}^n\|\nabla f_i(\mathbf{x}) - \nabla f(\mathbf{x})\|_2^2 \leq \sigma^2 $$
Here, $\nabla f_i(\mathbf{x})$ is the gradient of a single component function, which can be viewed as a stochastic gradient of $f(\mathbf{x})$ if $i$ is chosen uniformly at random from $[n]$.
\end{assumption}

These assumptions are standard in the analysis of stochastic optimization algorithms for nonconvex problems~\citep{ghadimi2013stochastic,reddi2016stochastic, nesterov2017random}.

\subsection{Convergence Notion}
\label{sec:opt_basics}

We study the nonconvex optimization problem in~\eqref{eq:opti}, which is common in modern machine learning.  
For convex problems, convergence is typically measured by the expected suboptimality $\mathbb{E}[f(\mathbf{x}_T)-f(\mathbf{x}^*)]$.  
In contrast, nonconvex problems often contain multiple local minima and saddle points~\citep{dauphin2014identifying,balasubramanian2022zeroth}, making global optimality intractable. Consequently, convergence is evaluated by the first-order stationarity condition via the expected squared gradient norm $\mathbb{E}[\|\nabla f(\mathbf{x})\|_2^2]$.  
An algorithm is deemed convergent when this metric approaches zero or falls below a tolerance $\epsilon$~\citep{liu2020primer,mu2024variance}, which is the criterion used in our theoretical guarantees.

\subsection{ZO Gradient Estimation}
\label{sec:zo_estimation}
Consider an individual cost function $f_i: \mathbb{R}^d \to \mathbb{R}$ that satisfies the conditions in Assumption~\ref{assump:smoothness}. The ZO approach estimates gradients using only function evaluations. A commonly used two-point ZO gradient estimator for $f_i(\mathbf{x})$ is defined as \citep{spall1992multivariate,nesterov2017random}:
\begin{equation}
\label{eq:zo_two_point}
\hat{\nabla}f_i(\mathbf{x}) = \frac{d}{\mu} \left[ f_i(\mathbf{x} + \mu\mathbf{u}_i) - f_i(\mathbf{x}) \right] \mathbf{u}_i, \quad \text{for } i \in [n],
\end{equation}
where $d$ is the dimensionality of the parameter vector $\mathbf{x}$, $\mu > 0$ is a small smoothing parameter, and $\{\mathbf{u}_i\}_{i=1}^n$ are i.i.d. random direction vectors drawn uniformly from the unit Euclidean sphere in $\mathbb{R}^d$ (i.e., $\mathbf{u}_i \sim U(\mathbb{S}^{d-1})$) \citep{flaxman2004online, shamir2017optimal, gao2018information}.

In general, for $\mu > 0$, the ZO gradient estimator $\hat{\nabla}f_i(\mathbf{x})$ is a biased approximation of the true gradient $\nabla f_i(\mathbf{x})$. The bias tends to decrease as $\mu \to 0$. However, in practical implementations, choosing an excessively small $\mu$ can render the function difference $f_i(\mathbf{x} + \mu\mathbf{u}_i) - f_i(\mathbf{x})$ highly sensitive to numerical errors or system noise or numerical precision issues, potentially failing to accurately represent the local change in the function \citep{lian2016comprehensive}. A key property of the ZO estimator is that for $\mu > 0$, it provides an unbiased estimate of the gradient of a smoothed version of $f_i$, often denoted $f_{i,\mu}(\mathbf{x}) = \mathbb{E}_{\mathbf{v}}[f_i(\mathbf{x}+\mu \mathbf{v})]$ (where $\mathbf{v}$ is a random vector from a unit ball or sphere), i.e., $\mathbb{E}_{\mathbf{u}_i}[\hat{\nabla}f_i(\mathbf{x})] = \nabla f_{i,\mu}(\mathbf{x})$ \citep{liu2020primer}.

To reduce the error of the ZO gradient estimate, a multi-point version can be employed. Instead of using a single random direction $\mathbf{u}_i$, $q \geq 1$ i.i.d. random directions $\{\mathbf{u}_{i,j}\}_{j=1}^{q}$ are sampled for each $f_i$. Since estimating along each direction requires two function queries, the multi-point ZO gradient estimator involves a total of $2q$ function queries and is defined as \citep{duchi2015optimal, liu2020primer}:
\begin{equation}
\label{eq:zo_multi_point}
\hat{\nabla}f_i(\mathbf{x}) = \frac{d}{\mu q} \sum_{j=1}^{q} \left[ f_i(\mathbf{x} + \mu\mathbf{u}_{i,j}) - f_i(\mathbf{x}) \right] \mathbf{u}_{i,j}, \quad \text{for } i \in [n].
\end{equation}
We refer to this as the multi-point ZO gradient estimate throughout the paper. 

\subsection{Notations}
In this paper, we denote $\nabla f(x), \nabla f_i(x)$ as FO gradients of $f(x)$ and $f_i(x)$, respectively and $\hat{\nabla}f(x), \hat{\nabla}f_i(x)$ as their ZO variants. $\mathbb{E}[\cdot]$ operates as the usual mathematical expectation, and $\mathcal{I}$ is a mini-batch of indices sampled from $[n] := {1, \dots, n}$, with size $b = |\mathcal{I}|$. $\|\cdot\|_2$ denotes the Euclidean (i.e., $\ell_2$) norm.

\section{Hybrid FO and ZO Stochastic Variance Reduction (VAMO)}

\subsection{From SVRG and ZO-SVRG to Hybrid SVRG}
\label{sec:from_svrg_to_hybrid}
The principles of FO-SVRG and ZO-SVRG have been extensively explored in optimization literature \citep{johnson2013accelerating,reddi2016stochastic, liu2018zeroth, ji2019improved}. FO-SVRG, in particular, is known to achieve a linear convergence rate $O(1/T)$ for non-convex problems under certain conditions, significantly outperforming the convergence rate of FO-SGD \citep{reddi2016stochastic}. The key step of FO-SVRG involves leveraging a full gradient $\nabla f(\hat{\mathbf{x}})$, computed periodically at the snapshot $\hat{\mathbf{x}}$, to construct a variance-reduced stochastic gradient estimate \citep{johnson2013accelerating}:
\begin{equation}
    \hat{\mathbf{g}}_{\text{FO-SVRG}} = \nabla f_{\mathcal{I}}(\mathbf{x}) - \nabla f_{\mathcal{I}}(\hat{\mathbf{x}}) + \nabla f(\hat{\mathbf{x}}), \label{eq:svrg}
\end{equation}
where $\nabla f_{\mathcal{I}}(\mathbf{x}) = \frac{1}{b} \sum_{i \in \mathcal{I}} \nabla f_i(\mathbf{x})$ is the mini-batch stochastic gradient from a subset $\mathcal{I} \subseteq [n]$ of size $b$. A crucial property of $\hat{\mathbf{g}}_{\text{FO-SVRG}}$ is that $\hat{\mathbf{g}}_{\text{FO-SVRG}}$ is an unbiased gradient estimate of $\nabla f(\mathbf{x})$, $\mathbb{E}[\hat{\mathbf{g}}_{\text{FO-SVRG}}] = \nabla f(\mathbf{x})$ \citep{johnson2013accelerating}.

In the ZO setting, ZO-SVRG adapts the SVRG structure by replacing all explicit gradient computations with ZO estimates derived from function evaluations:
\begin{equation}
    \hat{\mathbf{g}}_{\text{ZO-SVRG}} = \hat{\nabla} f_{\mathcal{I}}(\mathbf{x}) - \hat{\nabla} f_{\mathcal{I}}(\hat{\mathbf{x}}) + \hat{\nabla} f(\hat{\mathbf{x}}), \label{eq:zo-svrg}
\end{equation}
where $\hat{\nabla}f_{\mathcal{I}}(\mathbf{x})=(1/b)\sum_{i\in \mathcal{I}}\hat{\nabla}f_{i}(\mathbf{x})$, $\hat{\nabla}f(\mathbf{x})=\hat{\nabla}f_{[n]}(\mathbf{x})$, and $\hat{\nabla}f_i(\mathbf{x})$ is a ZO gradient estimate, as defined in Section~\ref{sec:zo_estimation}. While structurally similar, a key distinction is that $\hat{\mathbf{g}}_{\text{ZO-SVRG}}$ is generally a biased estimate of $\nabla f(\mathbf{x})$ due to the inherent bias of $\hat{\nabla}f_i(\mathbf{x})$ relative to $\nabla f_i(\mathbf{x})$ \citep{liu2020primer}. This bias significantly complicates its convergence analysis compared to FO-SVRG.

VAMO (Algorithm~\ref{alg:fo-zo-svrg}) is motivated by the high cost of computing the full gradient $\nabla f(\hat{\mathbf{x}})$ in SVRG for large-scale models, and introduces a hybrid gradient estimator that combines FO and ZO components to reduce this overhead:
\begin{equation}
    \hat{\mathbf{g}} = \nabla f_{\mathcal{I}}(\mathbf{x}) - \alpha \left( \hat{\nabla} f_{\mathcal{I}}(\hat{\mathbf{x}}) - \hat{\nabla} f(\hat{\mathbf{x}}) \right). \label{eq:fo-zo-svrg}
\end{equation}
Here, $\nabla f_{\mathcal{I}}(\mathbf{x})$ is the standard FO mini-batch stochastic gradient at the current iterate $\mathbf{x}$, while $\hat{\nabla} f(\hat{\mathbf{x}})$ is the ZO estimate of the full gradient at the snapshot $\hat{\mathbf{x}}$. 

A critical design choice in ~\eqref{eq:fo-zo-svrg} is the construction of the variance-reduction term $\alpha \left( \hat{\nabla} f_{\mathcal{I}}(\hat{\mathbf{x}}) - \hat{\nabla} f(\hat{\mathbf{x}}) \right)$. To preserve the desirable unbiased property of FO-SVRG, we ensure that the expectation of the ZO variance correction term, $\mathbb{E}\left[\hat{\nabla} f_{\mathcal{I}}(\hat{\mathbf{x}}) - \hat{\nabla} f(\hat{\mathbf{x}})\right]$, is zero. Using ZO estimates for both terms within the parentheses, $\hat{\nabla} f_{\mathcal{I}}(\hat{\mathbf{x}})$ and $\hat{\nabla} f(\hat{\mathbf{x}})$, is key to this property and also contributes to computational savings at the snapshot. Maintaining this unbiasedness is pivotal, as it allows for a more tractable convergence analysis similar to FO-SVRG, distinguishing our approach from many ZO algorithms that contend with biased estimators. Furthermore, VAMO introduces a novel mixing coefficient $\alpha > 0$. This parameter allows for explicit control over the influence of the ZO variance correction term. We will provide a detailed theoretical and empirical analysis of $\alpha$ in Appendix~\ref{appendix:analyse_alpha}.

\emph{The introduction of this hybrid structure, particularly the ZO estimation at snapshot and the mixing coefficient $\alpha$, means that the convergence analysis of VAMO cannot be trivially inherited from existing FO-SVRG or ZO-SVRG analyses.} It requires a dedicated theoretical investigation to characterize its behavior and prove its convergence guarantees, which constitutes a core part of our contribution in Section~\ref{sec:convergence_analysis}. This distinct analytical challenge underscores the theoretical novelty of our work.

\begin{algorithm}[h]
\caption{VAMO $(T, m, \{\eta_k\}, b, \bar{\mathbf{x}}_0, \mu,\alpha)$}
\label{alg:fo-zo-svrg}
\begin{algorithmic}[1]
\STATE \textbf{Input:} In addition to parameters in SVRG, set smoothing parameter $\mu > 0$.
\FOR{$s = 1, 2, \dots, S$}
    \STATE compute ZO estimate $\hat{\mathbf{g}}_s = \hat\nabla f(\bar{\mathbf{x}}_{s-1})$
    \STATE set $\mathbf{x}_0^s = \bar{\mathbf{x}}_{s-1}$
    \FOR{$k = 0, 1, \dots, m - 1$}
        \STATE choose mini-batch $I_k$ of size $b$
        \STATE compute hybrid FO and ZO gradient blending: $\mathbf{v}_k^s = \nabla f_{I_k}(\mathbf{x}_k^s) - \alpha(\hat\nabla f_{I_k}(\mathbf{x}_0^s) - \hat{\mathbf{g}}_s)$
        \STATE update $\mathbf{x}_{k+1}^s = \mathbf{x}_k^s - \eta_k \mathbf{v}_k^s$
    \ENDFOR
    \STATE set $\bar{\mathbf{x}}_s = \mathbf{x}_m^s$
\ENDFOR
\STATE return $\bar{\mathbf{x}}$ chosen uniformly at random from $\{\{\mathbf{x}_k^s\}^{m-1}_{k=0}\}^S_{s=1}$.
\end{algorithmic}
\end{algorithm}

\subsection{Convergence analysis}
\label{sec:convergence_analysis}

In this section, we present the convergence analysis for VAMO using the two-point ZO gradient estimate in ~\eqref{eq:zo_two_point}. Our analysis is based on an upper bound on the expected squared gradient norm $\mathbb{E}\bigl[\|\nabla f(\bar{\mathbf{x}})\|_2^2\bigr]$, as shown in Theorem~\ref{theorem:main}. As discussed in Section~\ref{sec:opt_basics}, for non-convex objectives, a small value of $\mathbb{E}\bigl[\|\nabla f(\bar{\mathbf{x}})\|_2^2\bigr]$ implies convergence to a stationary point.

\begin{theorem} 
\label{theorem:main}
Under the assumptions in Section \ref{sec:assumptions}, and the two-point ZO gradient estimate is used. The output $\bar{\mathbf{x}}$ of Algorithm~\ref{alg:fo-zo-svrg} satisfies:
\begin{equation} 
\begin{aligned}
\mathbb{E}\bigl[\|\nabla f(\bar{\mathbf{x}})\|_2^2\bigr] 
&\leq \frac{\mathbb{E}[f(\bar{\mathbf{x}}_0) - f^*]}{T\bar{\gamma}} + \frac{S\chi_m}{T\bar{\gamma}},
\label{eq:main_bound}
\end{aligned}
\end{equation}
where $T = Sm$, $f^* = \min_{\mathbf{x}} f(\mathbf{x})$, $\bar{\gamma} = \min_{k\in[m]}\gamma_k$, and $\chi_m = \sum_{k=0}^{m-1}\chi_k$.
$\gamma_k$ and $\chi_k$ are coefficients which depend on $\{\eta_k, \mu, b, d, \alpha\}$. The proof is provided in Appendix~\ref{appendix:proof_the1}.

\end{theorem}

Compared to FO-SVRG, Theorem~\ref{theorem:main} has an additional error $(S\chi_m/(T\bar{\gamma})) $ due to the use of the ZO gradient estimator. To obtain a clear dependence on these parameters and explore deeper insights into convergence, we simplify \eqref{eq:main_bound} to suit the specific parameter settings, as shown below.

\begin{corollary}
\label{cor:convergence-rate}
Suppose parameters are set as
\begin{equation}
\mu = \frac{1}{\sqrt{T}},\quad \eta_k = \eta = \frac{\rho}{L}, \quad \alpha = \frac{1}{d},
\end{equation} 
with $\beta_k = \beta = L$, where $0 < \rho \leq 1$ is a universal constant independent of $b$, $d$, $\alpha$, $L$ and $T$. Then Theorem~\ref{theorem:main} implies
\begin{equation}
\frac{\mathbb{E}[f(\bar{\mathbf{x}}_0) - f^*]}{T\bar{\gamma}} \leq O\biggl(\frac{1}{T}\biggr), \quad \frac{S\chi_m}{T\bar{\gamma}} \leq O\biggl(\frac{1}{bT} + \frac{1}{b}\biggr),
\end{equation}
yielding the convergence rate:
\begin{equation}
\mathbb{E}\bigl[\|\nabla f(\bar{\mathbf{x}})\|_2^2\bigr] \leq O\biggl(\frac{1}{T} + \frac{1}{bT} + \frac{1}{b}\biggr).
\label{eq:cor-converge}
\end{equation}
The proof is provided in Appendix~\ref{appendix: proof_cor1}.
\end{corollary}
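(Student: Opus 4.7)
The plan is to substitute the specified parameter choices directly into the bounds of Theorem~\ref{theorem:main} and simplify term by term, exploiting the fact that $\alpha = 1/d$ is calibrated to cancel the dimensional factors the ZO estimator introduces. First I would plug $\mu = 1/\sqrt{T}$, $\eta_k = \rho/L$, $\alpha = 1/d$, and $\beta_k = L$ into $\gamma_k$ and $\chi_k$. In $\gamma_k$, the coefficient $2\alpha^2 - 2\alpha + 1 + 24d\delta_n\alpha^2/b$ reduces to $1 - 2/d + 2/d^2 + 24\delta_n/(bd) = O(1)$ uniformly in $d$, so the quadratic correction in $\gamma_k$ is $O(\rho^2/L)$. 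In $\chi_k$, the bias term $9\delta_n d^2 L^2 \mu^2 \alpha^2/b$ collapses to $9\delta_n L^2/(bT)$, and the variance factor $4\sigma^2(24d\delta_n\alpha^2 + (1-\alpha)^2)/b = O(\sigma^2/b)$. This is the key payoff of $\alpha = 1/d$: the $O(d)$ scaling baked into the two-point ZO estimator is exactly compensated.

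The second and hardest step is to bound $c_k$ via the recurrence~\eqref{eq:c_k_recursion}. After substitution it becomes a geometric recursion $c_k = \theta c_{k+1} + \psi$ with $\theta = 1 + \rho + 6(4d+1)\delta_n\rho^2/b$, $\psi = 3(4d+1)L\delta_n\rho^2/b$, and $c_m = 0$, giving $c_0 = \psi(\theta^m - 1)/(\theta - 1)$. The subtlety is that the factor $(4d+1)/b$ is not multiplied by a compensating $\alpha$, so keeping $c_k = O(L)$ uniformly requires both $m\rho$ and $m\cdot d\rho^2/b$ to stay bounded. Following standard nonconvex SVRG-type arguments, I would show that with $\rho$ sufficiently small relative to $m$ (and $m$ chosen in the usual SVRG regime) one obtains $c_k \leq CL$ for a universal $C < 1$. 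Combined with the first paragraph, this yields $(1 - c_{k+1}/\beta_k)\eta_k = \Omega(1/L)$ dominating the $O(\rho^2/L)$ quadratic correction, hence $\bar{\gamma} = \Omega(1/L)$.

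With $c_k = O(L)$ and $\bar{\gamma} = \Omega(1/L)$ in hand, the remainder is mechanical. Each $\chi_k$ is at most $O(L)\cdot[L^2/(bT) + \sigma^2/b]\cdot \rho^2/L^2 = O(1/(bT) + 1/b)$, so $\chi_m = m\cdot O(1/(bT) + 1/b)$ and therefore $S\chi_m/(T\bar{\gamma}) = T \cdot O(1/(bT) + 1/b) \cdot O(L)/T = O(1/(bT) + 1/b)$ after absorbing $L$, $\rho$, $\sigma$, and $\delta_n$ into constants. The initial-suboptimality term gives $\mathbb{E}[f(\bar{\mathbf{x}}_0) - f^*]/(T\bar{\gamma}) = O(L/T) = O(1/T)$, and summing the two pieces produces the advertised rate $O(1/T + 1/(bT) + 1/b)$, matching~\eqref{eq:cor-converge}.

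The main obstacle is precisely the $c_k$ analysis. The dimension-killing trick afforded by $\alpha = 1/d$ simplifies $\gamma_k$ and $\chi_k$ but does not touch the $c_k$ recursion, so one must control $\theta^m$ directly. This is also where the corollary implicitly restricts the operating regime: treating $\rho$ as a universal constant independent of $b$ and $d$ while still demanding $\bar{\gamma} = \Omega(1/L)$ independent of $d$ effectively requires $b$ not too small relative to $d$, which I expect will appear as explicit conditions on the constants in the detailed argument rather than affecting the asymptotic form of the final rate.
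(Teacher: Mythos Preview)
Your overall plan mirrors the paper's proof: substitute the parameter choices, solve the $c_k$ recursion in closed form, deduce $\bar\gamma = \Omega(1/L)$, bound $\chi_m$, and combine. The discrepancy lies entirely in the $c_k$ step, and the obstacle you flag turns out to be an artifact of what appears to be a typo in the stated recursion~\eqref{eq:c_k_recursion}.

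In the paper's proof of Theorem~\ref{theorem:main} (and again in the proof of the corollary), the recursion for $c_k$ actually carries an $\alpha^2$ factor on the $(4d+1)$ terms:
\[
c_k = \Bigl(1 + \beta_k\eta_k + \frac{12(4d+1)L^2\delta_n\eta_k^2}{b}\,\alpha^2\Bigr)c_{k+1} + \frac{6(4d+1)L^3\delta_n\eta_k^2}{b}\,\alpha^2 .
\]
With $\alpha = 1/d$, the growth increment becomes $\theta = \rho + 12(4d+1)\delta_n\rho^2/(bd^2) = \rho + O\bigl(\rho^2/(bd)\bigr)$ and the source term is $O\bigl(L\rho^2/(bd)\bigr)$, so the closed form gives $c_0 \le 60L\rho\delta_n/(bd)$ once $(1+\theta)^m$ is bounded by $e$ via the usual SVRG choice $m = O(1/\rho)$. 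Thus the $\alpha = 1/d$ calibration \emph{does} kill the dimension in the $c_k$ recursion, exactly as it does in $\gamma_k$ and $\chi_k$; no side condition coupling $b$ and $d$ is required, and the concern in your final paragraph dissolves.

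With $c_0 = O\bigl(L\rho/(bd)\bigr)$ rather than merely $O(L)$, the lower bound $\bar\gamma \ge \eta\bigl(1 - 156\rho/(bd) - 4\rho\bigr)$ and the bound on $\chi_m$ follow by the mechanical substitutions you already describe, and the remaining steps in the paper match yours.
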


From Corollary~\ref{cor:convergence-rate}, we can observe that one advantage of VAMO is that, compared to previous ZO algorithms, the value of smoothing parameters $\mu$ is less restrictive. For example, ZO-SVRG required $\mu\leq O(1/\sqrt{dT})$, and ZO-SGD required $\mu \leq O(1/d\sqrt{T})$~\citep{liu2018zeroth}. Compared to FO-SGD, the algorithm achieves an improved rate of $O(1/T)$ rather than the rate of $O(1/\sqrt{T})$. Compared to ZO algorithms, the convergence rate is independent of the parameter dimension $d$. Compared to FO-SVRG, VAMO suffers an additional error of $O(1/b)$ inherited from $(S\chi_m/(T\bar{\gamma})) $ in \eqref{theorem:main}.

\subsection{Memory Efficiency of VAMO}
\label{sec:memory-cost}
Building on the convergence analysis of VAMO in the Section~\ref{sec:convergence_analysis}, where we showed that VAMO achieves an $O(1/T)$ convergence rate, we now analyze its memory efficiency in large-scale training.

In large-scale training, the total memory footprint of an algorithm is dominated by three components: model parameters $|\mathbf{x}|$, optimizer states, and dynamic memory for gradient computation~\citep{zhang2024revisiting}.  Here $|\mathbf{x}|$ denotes the memory required to store the model parameters in full precision, and $|\mathbf{x}_l|$ is that of the parameters in layer $l$.  The symbol $|\mathbf{a}_l|$ denotes the memory of activations or intermediate results of one single sample at layer $l$.

For FO methods, dynamic memory is dominated by intermediate results, scaling roughly as $\sum_l b\cdot|\mathbf{a}_l|$ in Table~\ref{table:convergence_summary} for a mini-batch of size $b$. FO-SVRG~\citep{johnson2013accelerating} further requires computing the full gradient at the snapshot point; while this can be accumulated over smaller mini-batches to reduce peak memory of intermediate results to $\sum_l b\cdot|\mathbf{a}_l|$, it increases computational cost per epoch due to multiple passes over all $n$ samples. If the full FO gradient is computed at once, the dynamic memory of intermediate results will increase to $\sum_l n\cdot|\mathbf{a}_l|$, which poses a substantial memory burden for large-scale settings. Therefore, FO-SVRG must trade off compute and memory when computing full FO gradients. In contrast, VAMO leverages full ZO gradient estimates, which do not require backpropagation, and thus do not need to store intermediate results. Even for estimation of full batch gradients at once, dynamic memory only reaches $\max_l |\mathbf{x}_l|$  , independent of $b$ , the same as ZO-SGD, which is much smaller than $\sum_l b \cdot |\mathbf{a}_l|$. Therefore, VAMO do not need to introduce a compute–memory trade-off like FO-SVRG in large-scale settings. Regarding optimizer states, VAMO only needs $|\mathbf{x}|$ for storing full ZO gradient at the snapshot, substantially smaller than Adagrad and Adam which need around $2|\mathbf{x}|$ or $3|\mathbf{x}|$ to store first- or second-momentum. In training large models, the optimizer states of Adam~\citep{kingma2014adam} and Adagrad~\citep{duchi2011adaptive} can occupy a substantial portion of memory, posing a significant challenge in resource-constrained environments. Table~\ref{table:memory_summary} summarizes the three memory components for different optimizers and a more detailed theoretical and empirical analysis of memory efficiency is provided in Appendix~\ref{sec:appendix_memory}.
\section{VAMO with Multi-Point ZO Gradient Estimation}
\label{sec:multi_point}
Building upon the VAMO algorithm previously introduced with a two-point ZO gradient estimator, this section presents its multi-point ZO estimation variant. This extension is a key component of VAMO's \textbf{adaptive} design, as adjusting the number of random directions $q$ in ZO estimate allows for explicit tuning of the trade-off between computational cost and the precision of the ZO-based variance reduction, thereby directly influencing convergence performance.
\begin{theorem}
\label{theorem:parameterized}
Suppose assumptions A1 and A2 hold, and the multi-point ZO gradient estimate is used in Algorithm~\ref{alg:fo-zo-svrg}. 
The gradient norm bound in \eqref{eq:main_bound} yields the simplified convergence rate:
\begin{equation}
\mathbb{E}\bigl[\|\nabla f(\bar{\mathbf{x}})\|_2^2\bigr] \leq O\biggl(\frac{1}{T} + \frac{1}{bT} + \frac{1}{b}\Bigl(1 - \frac{q}{d}\Bigr)^2\biggr).
\label{eq:thm2_result}
\end{equation}
With parameter choices $\mu=\frac{1}{q\sqrt{T}}$, $\eta = \frac{\rho}{L}$, $\alpha = \frac{q}{d}$. The proof is provided in Appendix~\ref{appendix:proof_the2}.

\end{theorem}

By contrast with Corollary~\ref{cor:convergence-rate}, it can be seen from ~\eqref{eq:thm2_result} that the use of multi-point version of VAMO reduces the error $O(1/ b)$ in ~\eqref{cor:convergence-rate} by leveraging multiple $q$ direction sampling, while increasing the computational cost accordingly.  If $q = d$, the algorithm's convergence rate become comparable to FO-SVRG. A comprehensive summary and comparison of the computational complexities and convergence rates of our proposed VAMO methods against various FO and ZO algorithms can be found in Table~\ref{table:convergence_summary} presented in Section~\ref{sec:related_work}. Briefly, the mixing coefficient $\alpha$ governs the trade-off between the FO stochastic gradient and the ZO variance-correction: larger $\alpha$ increases reliance on ZO information while smaller $\alpha$ favors FO updates to mitigate the error of ZO estimation. A detailed theoretical and empirical analysis of $\alpha$ is provided in Appendix~\ref{appendix:analyse_alpha}.

\section{Applications and Experiments }

\label{sec:experiments}

In this section, we present empirical results to validate the effectiveness and adaptability of the proposed VAMO through three experiments: (i) an adaptability study on a synthetic task varying the number of ZO query directions $q$, (ii) a benchmark on MNIST classification \citep{lecun1998gradient}, and (iii) large-scale fine-tuning of GPT-2, GPT-2 \citep{radford2019language} and RoBERTa \citep{liu2019roberta} on SST-2~\citep{socher2013recursive} and MNLI~\citep{williams2017broad}. We benchmark against popular FO methods (FO-SGD~\citep{robbins1951stochastic}, FO-Adagrad~\citep{duchi2011adaptive}, FO-Adam~\citep{kingma2014adam}) and ZO methods (ZO-SGD~\citep{ghadimi2013stochastic}, ZO-SVRG~\citep{liu2018zeroth}). 
Full setups and hyperparameters are deferred to Appendix~\ref{appendix:experiment_setup}.

\begin{figure}[ht] 
    \centering

    \begin{subfigure}[b]{0.45\textwidth}
        \centering
        \includegraphics[width=\linewidth]{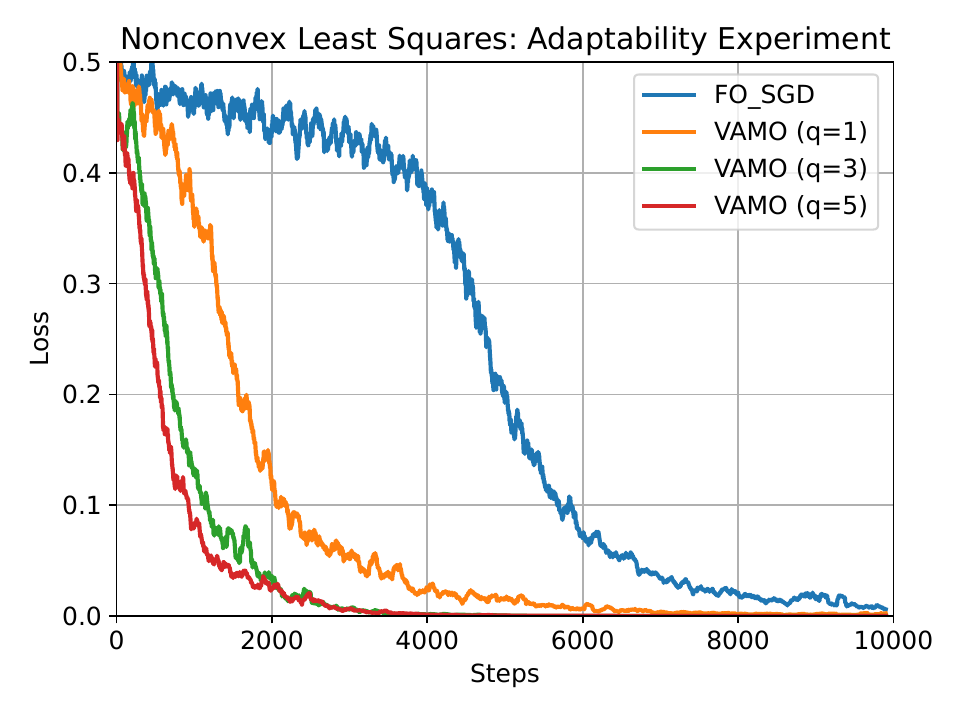}
        \caption{}
        \label{fig:internal_comparison}
    \end{subfigure}
    \hfill 
    \begin{subfigure}[b]{0.45\textwidth}
        \centering
        \includegraphics[width=\linewidth]{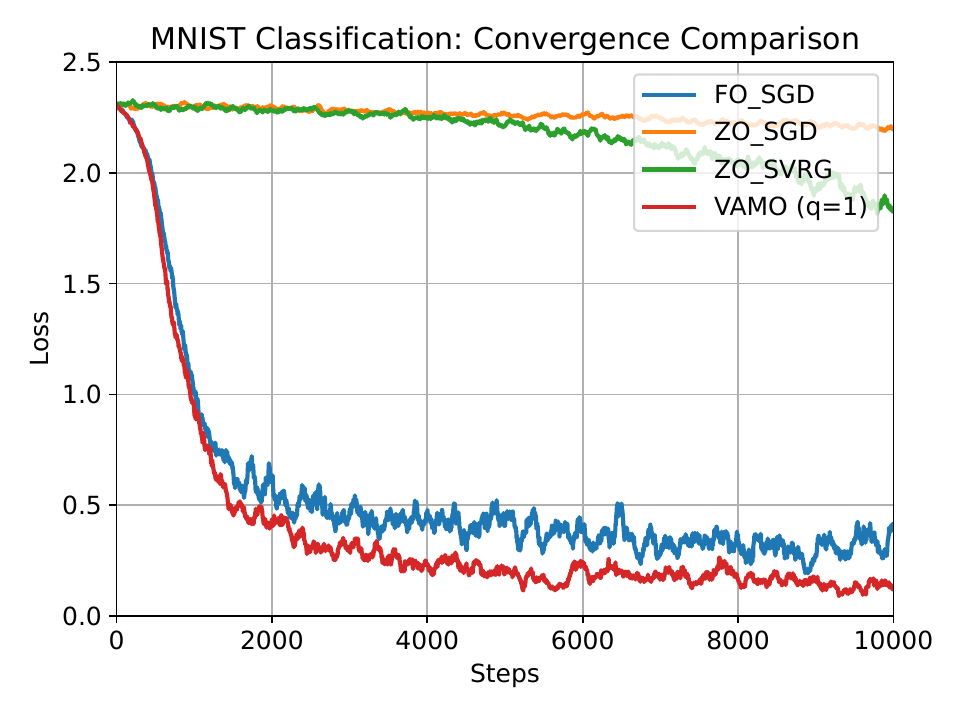}
        \caption{}
        \label{fig:mnist_comparison}
        
    \end{subfigure}

    \caption{(a) Convergence comparison on a non-convex least-squares task, showing VAMO with varying ZO query points ($q=1, 3, 5$) against FO-SGD. (b) Convergence comparison on the MNIST classification task against pure FO and ZO methods.}
    \label{fig:combined_experimental_results}
\end{figure}

\paragraph{Adaptability Experiment.}
To highlight VAMO’s adaptive nature, we tested its multi-point ZO estimation strategy on a synthetic non-convex least-squares task. We compared variants using $q \in \{1,3,5\}$ query directions against FO-SGD.
Fig.~\ref{fig:internal_comparison} presents the training loss convergence. Consistent with our theoretical analysis in Table~\ref{table:convergence_summary}, all VAMO variants achieve an $O(1/T)$ convergence rate, outperforming FO-SGD's $O(1/\sqrt{T})$ rate. The figure clearly illustrates VAMO's adaptability: increasing $q$ improves convergence performance, effectively mitigating the additional $O(1/b)$ error term associated with the two-point ($q=1$) variant.  This aligns with the theoretical prediction that this error term diminishes for larger $q$ (scaling towards $O((1-q/d)^2/b$). These results empirically validate our theory and highlight VAMO's practical ability to adaptively trade computational cost for enhanced convergence by managing ZO estimation error, a key aspect of its adaptive design.

\paragraph{Multiclass Classification.} 
On the MNIST benchmark, we trained a Multi-Layer Perceptron (MLP) and compared VAMO (two-point version, $q=1$) against FO optimizer (FO-SGD) and ZO optimizers (ZO-SGD and ZO-SVRG). 
As shown in Fig.~\ref{fig:mnist_comparison}, VAMO significantly outperforms purely ZO methods, converging faster and reaching a lower final loss. 
Its performance is also highly competitive with FO-SGD, underscoring the practical effectiveness of our hybrid strategy.

\begin{table}[h!]
\centering
\caption{We measure peak GPU memory consumption (in GB) when fine-tuning RoBERTa-Large and GPT-2 Medium models under varying batch sizes (bs) with a fixed context length (cl=128).}
\label{tab:gpu_memory_comparison_fp32}
\begin{tabular}{@{}l|ccc|ccc@{}}
\toprule
 & \multicolumn{3}{c|}{\textbf{RoBERTa-Large}} & \multicolumn{3}{c}{\textbf{GPT-2-Medium}} \\
\cmidrule(lr){2-4} \cmidrule(lr){5-7}
\textbf{Method} & \textbf{bs = 16} & \textbf{bs = 32} & \textbf{bs = 64} & \textbf{bs = 16} & \textbf{bs = 32} & \textbf{bs = 64} \\
\midrule
FO-SGD & 4.63 & 6.72 & 10.83 & 7.08 & 11.50 & 20.33 \\
FO-Adagrad & 7.21 & 10.46 & 16.54 & 10.25 & 15.74 & 27.57 \\
FO-Adam & 8.62 & 11.59 & 17.66 & 11.59 & 17.05 & 29.07 \\
\textbf{VAMO} & \textbf{5.95} & \textbf{7.83} & \textbf{11.96} & \textbf{8.40} & \textbf{12.62} & \textbf{21.46} \\
\bottomrule
\end{tabular}
\label{table:empirical_memory}
\end{table}

\paragraph{Fine-Tuning Experiments.} 
To further assess VAMO's practical utility and its advantages in complex, large-scale settings, we further evaluate VAMO on fine-tuning pre-trained large models, including GPT-2, GPT-2 Medium, and RoBERTa-large, on the SST-2 and MNLI datasets, comparing against standard FO optimizers (FO-SGD, FO-Adagrad, FO-Adam) and ZO optimizers (ZO-SGD, ZO-SVRG). 
Table~\ref{tab:gpu_memory_comparison_fp32} presents the peak GPU memory usage of different algorithms for batch sizes of 16, 32, and 64, with the context length fixed at 128. Fig.~\ref{fig:all_finetune_results} presents the training loss of different algorithms over both iteration steps and wall-clock time. 
VAMO achieves markedly faster and more stable convergence than FO-SGD, while maintaining a similar memory footprint. 
This improvement is theoretically supported by Table~\ref{table:convergence_summary}: VAMO enjoys a convergence rate of $\mathcal{O}(1/T)$, superior to the $\mathcal{O}(1/\sqrt{T})$ rate of FO-SGD. Compared with ZO methods, VAMO not only yields better empirical performance but also avoids dependence on the parameter dimension $d$, demonstrating stronger scalability to large models. When compared to Adagrad, VAMO achieves a similar level of performance but with far lower memory overhead, further validating its practicality.
Although VAMO underperforms Adam in terms of empirical convergence speed, it remains much more memory-efficient, as shown in Table~\ref{tab:gpu_memory_comparison_fp32}. Overall, these results confirm that VAMO provides a favorable balance between convergence efficiency and memory cost, making it a competitive optimizer for fine-tuning large models.

\begin{figure}[ht]
    \centering

    \begin{subfigure}[b]{0.45\textwidth}
        \centering
        \includegraphics[width=\linewidth]{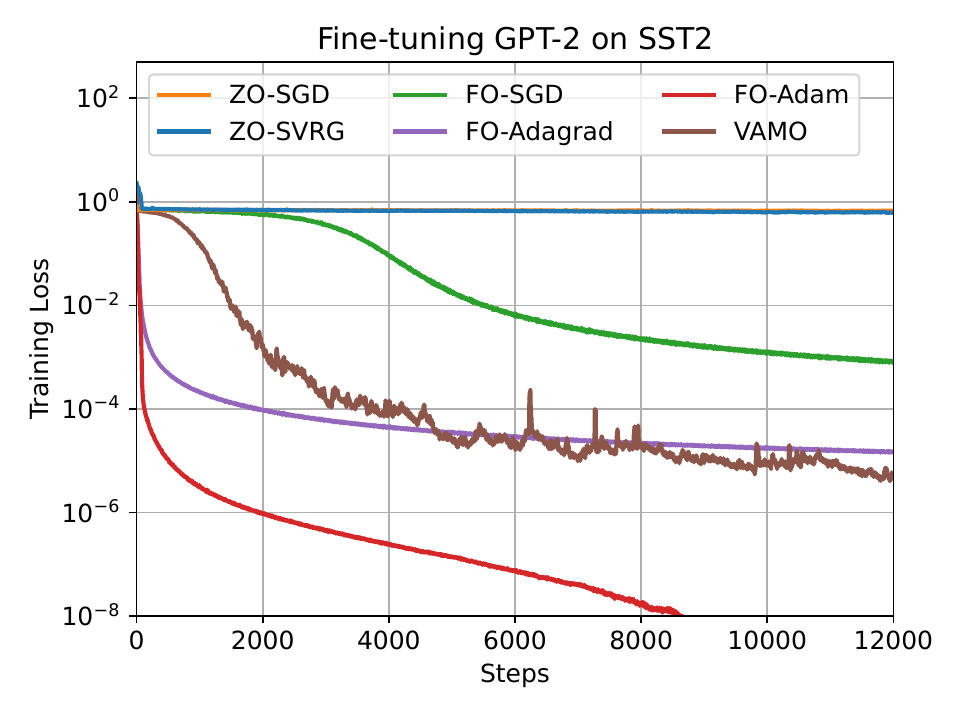}
        \label{sfig:gpt2_small_iterations}
    \end{subfigure}
    \hfill
    \begin{subfigure}[b]{0.45\textwidth}
        \centering
        \includegraphics[width=\linewidth]{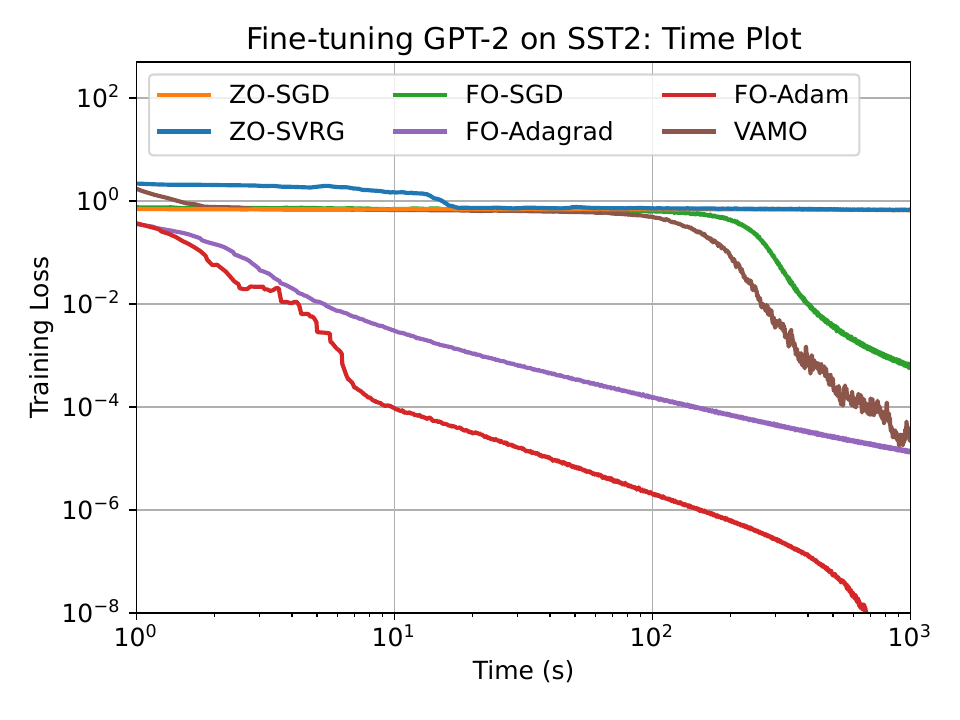}
        \label{sfig:gpt2_small_time}
    \end{subfigure}

    \begin{subfigure}[b]{0.45\textwidth}
        \centering
        \includegraphics[width=\linewidth]{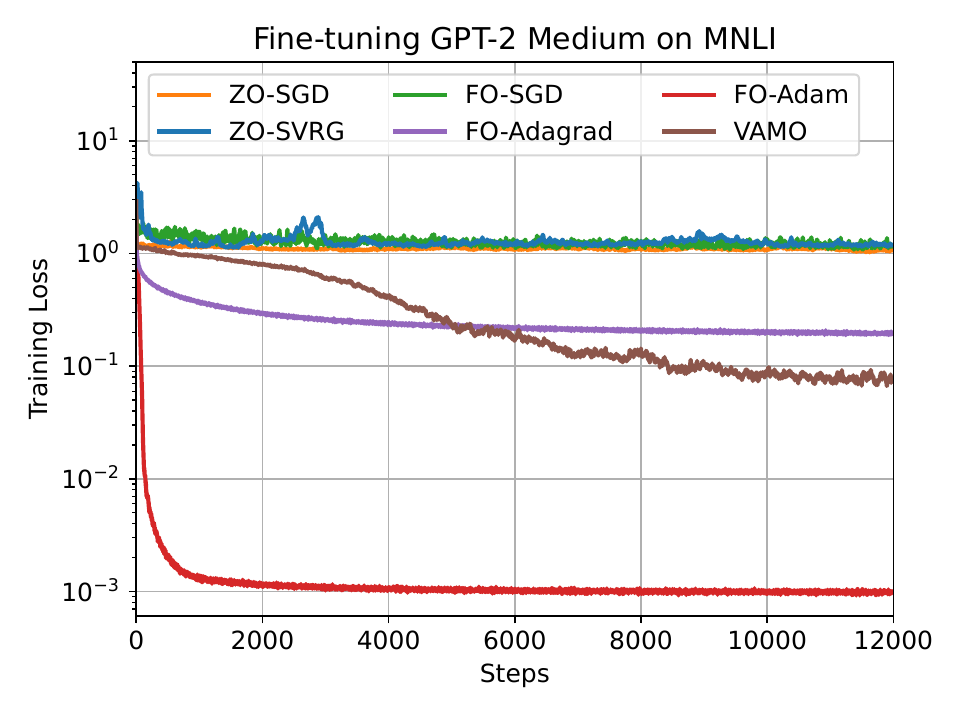}
        \label{sfig:gpt2_iterations}
    \end{subfigure}
    \hfill
    \begin{subfigure}[b]{0.45\textwidth}
        \centering
        \includegraphics[width=\linewidth]{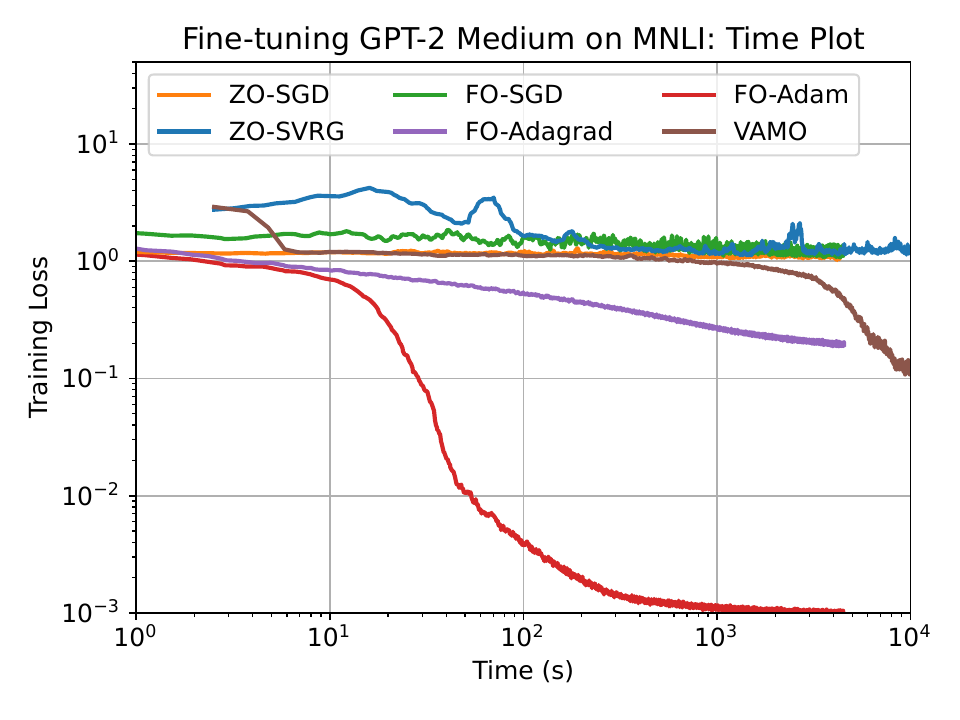}
        \label{sfig:gpt2_time}
    \end{subfigure}

    \begin{subfigure}[b]{0.45\textwidth}
        \centering
        \includegraphics[width=\linewidth]{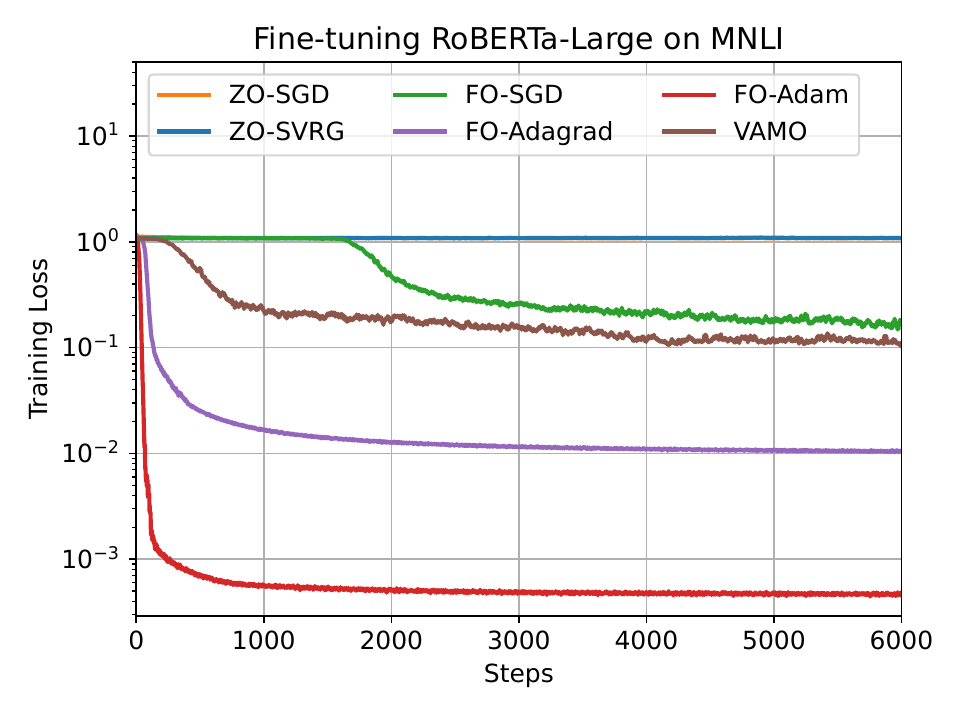}
        \label{sfig:roberta_iterations}
    \end{subfigure}
    \hfill
    \begin{subfigure}[b]{0.45\textwidth}
        \centering
        \includegraphics[width=\linewidth]{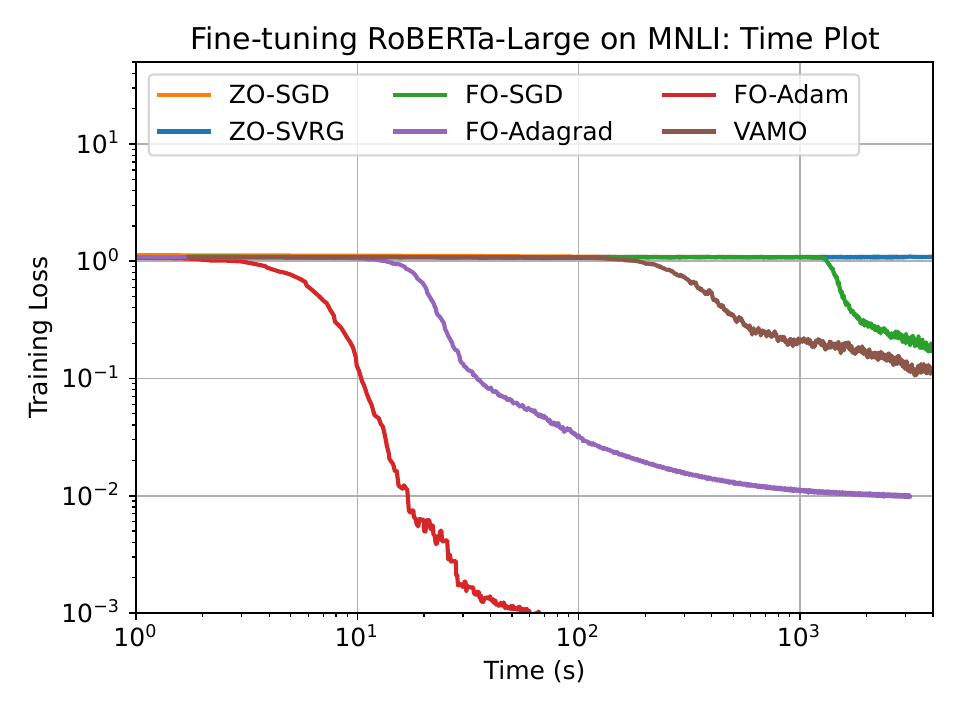}
        \label{sfig:roberta_time}
    \end{subfigure}

    \caption{Convergence comparison on fine-tuning GPT-2, GPT-2 medium and RoBERTa-large on SST-2 and MNLI datasets against pure FO and ZO methods.}
    \label{fig:all_finetune_results}
\end{figure}

\FloatBarrier
\section{Conclusion}
In this paper, we propose a hybrid FO and ZO variance-reduced algorithm, VAMO, for nonconvex optimization. We demonstrate that compared to FO-SGD, our algorithm improves the convergence rate from \( O(1/\sqrt{T}) \) to a linear rate of \( O(1/T) \), achieving convergence performance similar to FO-SVRG. Compared to ZO algorithms, our method maintains convergence performance independent of the parameter dimension \( d \), making it effective for optimizing high-dimensional problems. However, due to the use of two-point ZO gradient estimation, our convergence result includes an additional error term \( O(1/b) \). To mitigate this, we introduce a multi-point ZO gradient estimation variant, which reduces this error. Unlike previous purely FO or ZO methods, our hybrid approach leverages the advantages of both, enabling a more flexible trade-off between efficiency and convergence performance. This makes it more adaptable to real-world applications with complex constraints. Our theoretical analysis and empirical evaluations, including comparisons with state-of-the-art methods, demonstrate the effectiveness of our approach.



\bibliography{iclr2026_conference}
\bibliographystyle{iclr2026_conference}

\appendix

\section{Detailed Discussion on Memory Efficiency}
\label{sec:appendix_memory}

We have made a brief analysis of VAMO's memory efficiency in Section~\ref{sec:memory-cost}. In this section, we analyse VAMO’s memory profile in detail and compare it against standard FO and ZO optimizers, theoretically and empirically.

\subsection{Theoretical Analysis}
We adapt the memory analysis framework of \citet{zhang2024revisiting} to compare the peak memory consumption of different optimizers. Table~\ref{table:memory_summary} summarizes the decomposition into three components: the model (Weight Mem.), the optimizer states (Opt. State Mem.), and dynamic allocation for computing gradients and optimization (Dynamic Mem.).

\begin{table}[htbp]
    \caption{Comparison of the instant peak memory consumption of different optimizers when fine-tuning the full model. Here $|\mathbf{x}|$ denotes the memory required to store the model parameters in full precision, and $|\mathbf{x}_l|$ is that of the parameters in layer $l$.  The symbol $|\mathbf{a}_l|$ denotes the memory of intermediate activations of one single sample at layer $l$, where $b$ and $n$ correspond to the mini-batch size and the total dataset size, respectively. In Dynamic Mem., $b\cdot|\mathbf{a}_l|$ represents the memory of saving mini-batch activations at layer $l$, while $|\mathbf{x}_l|$ also accounts for temporarily saved gradients, as they have the same size as the corresponding parameters~\citep{zhang2024revisiting}. }
    \label{table:memory_summary}
    \small
    \centering
    \renewcommand{\arraystretch}{1.2}
    \setlength{\tabcolsep}{6pt}
    \begin{tabularx}{\textwidth}{@{}p{0.27\textwidth} p{0.13\textwidth} p{0.32\textwidth} p{0.27\textwidth}@{}}
        \toprule
        \textbf{Optimizer} & \textbf{Weight Mem.} & \textbf{Dynamic Mem. (Grad.\&Opt.)} & \textbf{Opt. State Mem.} \\
        \midrule
        FO-SGD & $|\mathbf{x}|$ & $\sum_l \max\{b\cdot|\mathbf{a}_l|, |\mathbf{x}_l|\}$ & $0$ \\
        FO-SVRG (accumulation) & $|\mathbf{x}|$ & $\sum_l \max\{b\cdot|\mathbf{a}_l|, |\mathbf{x}_l|\}$ & $2|\mathbf{x}|$ \\
        FO-SVRG (full batch at once) & $|\mathbf{x}|$ & $\sum_l \max\{n\cdot|\mathbf{a}_l|, |\mathbf{x}_l|\}$ & $2|\mathbf{x}|$ \\
        FO-Adam & $|\mathbf{x}|$ & $\sum_l \max\{b\cdot|\mathbf{a}_l|, |\mathbf{x}_l|\}$ & $3|\mathbf{x}|$ \\
        FO-Adagrad & $|\mathbf{x}|$ & $\sum_l \max\{b\cdot|\mathbf{a}_l|, |\mathbf{x}_l|\}$ & $2|\mathbf{x}|$ \\
        VAMO & $|\mathbf{x}|$ & $\sum_l \max\{b\cdot|\mathbf{a}_l|, |\mathbf{x}_l|\}$ & $|\mathbf{x}|$ \\
        ZO-SGD  & $|\mathbf{x}|$ & $\max_l |\mathbf{x}_l|$ & $0$ \\

        \bottomrule
    \end{tabularx}
\end{table}

The advantage of ZO methods is the removal of backpropagation, which eliminates the need to store intermediate activations ${|\mathbf{a}_l|}$ for each input example during the forward pass. In FO optimizers, the total activation memory grows as $b\cdot \sum_l |\mathbf{a}_l|$ for a mini-batch of size $b$, while ZO methods completely avoid this dependence on the batch size. This reduction in dynamic memory cost becomes especially significant when $b$ is large in large-scale settings. Moreover, ZO gradients can be estimated in a layer-wise manner, so only the gradients of the currently processed layer need to be stored, further lowering the peak memory footprint compared to FO methods~\citep{zhang2024revisiting}. For ZO-SGD, dynamic memory is dominated by a temporary copy of the parameters of a single layer for gradient estimation, resulting in a peak memory of $\max_l |\mathbf{x}_l|$, independent of batch size, and no extra optimizer state is required.

In contrast, all FO methods require storing intermediate activations or other temporary variables to compute gradients, 
which results in dynamic memory that grows with the mini-batch size. 
FO~\citep{johnson2013accelerating} additionally needs to compute the full gradient at the snapshot point. 
Although this full gradient can be obtained by accumulating gradients over smaller mini-batches, 
reducing dynamic memory of intermediate results per step to $b\cdot|\mathbf{a}_l|$, it comes at the expense of higher computational cost per epoch due to multiple passes over all mini-batches. 
Computing the full gradient in a single full batch would substantially increase dynamic memory of intermediate results to $n\cdot|\mathbf{a}_l|$, requiring a compute-memory trade-off as commonly done in FO-SVRG. 
In contrast, VAMO leverages ZO estimation to compute the full gradient without storing intermediate activations, 
so even when computing over the full batch at once, the peak dynamic memory is only $\max_l |\mathbf{x}_l|$, 
the same as ZO-SGD, and no compute–memory trade-off is required. 
This makes VAMO highly memory-efficient for large mini-batches or large models.

Regarding optimizer state, adaptive methods such as Adam~\citep{kingma2014adam} or Adagrad~\citep{duchi2011adaptive} require additional memory for first- or second-order momentum, 
resulting in $2|\mathbf{x}|$--$3|\mathbf{x}|$ extra memory per step. FO-SVRG also increases optimizer state by $2|\mathbf{x}|$ for storing the snapshot and full gradient \citep{johnson2013accelerating}. VAMO, on the other hand, only needs $|\mathbf{x}|$ for the full ZO gradient used in variance reduction, which is substantially smaller than Adam, Adagrad, or FO-SVRG. In memory-constrained scenarios, this makes VAMO a more favorable choice for fine-tuning large models.

\subsection{Empirical Results Analysis}

To empirically validate our theoretical analysis, we conducted \textbf{full-parameter fine-tuning} of a pre-trained RoBERTa-large model~\citep{liu2019roberta} and GPT-2 medium model~\citep{radford2019language} on the MNLI dataset~\citep{williams2017broad} with FP32 (see Section~\ref{sec:experiments}). We measured the peak GPU memory consumption of VAMO, FO-SGD, FO-Adagrad and FO-Adam by varying the batch size (with sequence length fixed at 128). The detailed results are presented in Table~\ref{tab:gpu_memory_comparison_fp32} in Section~\ref{sec:experiments}.

We observe that VAMO’s peak memory consumption is consistently only marginally higher than FO-SGD, with the difference stemming from the overhead of storing the ZO full gradient for variance reduction. Importantly, as the batch size increases, the scaling trend of VAMO almost parallels that of FO-SGD, confirming that the intermediate results for computing mini-batch FO gradients dominates the memory cost in both methods. Notably, the ZO full gradient in VAMO does not require storing additional intermediate activations, further preventing the memory blow-up typically associated with FO-SVRG. In contrast, FO-Adam requires significantly more memory due to the additional storage of first- and second-moment estimates, resulting in an optimizer state of size $3|\mathbf{x}|$. This overhead makes FO-Adam substantially less memory efficient, especially for larger models, whereas VAMO maintains a footprint close to FO-SGD while avoiding the prohibitive costs of FO-SVRG. Taken together, these results highlight VAMO’s practical advantage: for a negligible increase in memory compared to FO-SGD, it achieves markedly better convergence, as shown in Section~\ref{sec:experiments}, while remaining far more memory efficient than FO-Adam. This balance makes VAMO particularly well suited for memory-constrained large-scale fine-tuning tasks.

\section{Detailed Discussing of Convergence and Complexity}
\label{appendix:discussion_complexity}
In this section, we provide additional discussion of the convergence rates and computational complexities summarized in Table~\ref{table:convergence_summary} (see Section~\ref{sec:related_work}).
Table~\ref{table:convergence_summary} summarizes the convergence rates and computational complexities of our proposed methods, referred to as VAMO and VAMO (multi-point) in the table alongside several FO and ZO algorithms. 

For ZO methods, ZO-SVRG is listed with a complexity of $O(nS + bT)$ function queries. Among FO methods, FO-SGD has the lowest computational cost $O(bdT)$ but also exhibits the slowest convergence rate of $O(1/\sqrt{T})$. FO-SVRG improves convergence to $O(1/T)$ but increases the cost to $O(dnS + bdT)$ due to full gradient computations. Our proposed VAMO maintains a complexity of $O(nS + bdT)$, similar to ZO-SVRG in terms of $nS$ but replacing the $ndS$ full gradient cost of FO-SVRG with a cheaper $nS$ ZO estimation cost for snapshots, while achieving a fast $O(1/T + 1/b)$ convergence rate. This makes its computational complexity significantly slower than FO-SVRG, especially when $d$ is large. The VAMO (multi-point) variant has a complexity of $O(qnS + bdT)$. Here, increasing $q$ (the number of ZO sampling directions) leads to higher complexity  but also improves the convergence rate to $O(1/T + (1-q/d)^2/b)$, reducing the $O(1/b)$ error term and making its performance more comparable to FO-SVRG, particularly if $q \ll d$. This demonstrates that our proposed methods provide a flexible and often more efficient trade-off between computational cost and convergence performance compared to existing pure FO or ZO approaches. Our work further develops such an adaptive hybrid approach by specifically integrating ZO estimation within the SVRG structure, aiming to reduce the full-gradient cost while preserving strong convergence guarantees independent of dimensionality.
\section{ZO gradient estimator}
\label{appendix:zo_gradient_estimator}
\begin{lemma}
\label{lem:zo-gradient}
Under the assumptions in Section~\ref{sec:assumptions}, and define $f_\mu = \mathbb{E}_{\mathbf{u}\sim U_b}[f(\mathbf{x} + \mu\mathbf{u})]$ where $U_b$ is the uniform distribution over the unit Euclidean ball. Then:

\begin{enumerate}[label=(\roman*), nosep]
\item $f_\mu$ is $L$-smooth with
\begin{equation}
\nabla f_\mu(\mathbf{x}) = \mathbb{E}_{\mathbf{u}}\bigl[\hat{\nabla}f(\mathbf{x})\bigr].
\label{eq:zo-expectation}
\end{equation}

\item For any $\mathbf{x}\in\mathbb{R}^d$:
\begin{equation}
|f_\mu(\mathbf{x}) - f(\mathbf{x})| \leq \frac{L\mu^2}{2},
\label{eq:bound1}
\end{equation}
\begin{equation}
\|\nabla f_\mu(\mathbf{x}) - \nabla f(\mathbf{x})\|_2^2 \leq \frac{\mu^2L^2d^2}{4},
\label{eq:bound2}
\end{equation}
\begin{equation}
\frac{1}{2}\|\nabla f(\mathbf{x})\|_2^2 - \frac{\mu^2L^2d^2}{4} \leq \|\nabla f_\mu(\mathbf{x})\|_2^2 \leq 2\|\nabla f(\mathbf{x})\|_2^2 + \frac{\mu^2L^2d^2}{2}.
\label{eq:bound3}
\end{equation}

\item For any $\mathbf{x}\in\mathbb{R}^d$:
\begin{equation}
\mathbb{E}_{\mathbf{u}}\bigl[\|\hat{\nabla}f(\mathbf{x}) - \nabla f_\mu(\mathbf{x})\|_2^2\bigr] \leq 2d\|\nabla f(\mathbf{x})\|_2^2 + \frac{\mu^2L^2d^2}{2}.
\label{eq:variance-bound}
\end{equation}
\end{enumerate}
\end{lemma}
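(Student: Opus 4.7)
The plan is to establish the three parts of Lemma~\ref{lem:zo-gradient} by exploiting two equivalent representations of $f_\mu$: as a uniform average over the unit ball (for the smoothness claim and \eqref{eq:bound1}), and as a surface integral on $\mathbb{S}^{d-1}$ obtained via Stokes' theorem (for the gradient identity and the remaining bounds).

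For part (i), the $L$-smoothness of $f_\mu$ follows from Jensen's inequality applied to the $L$-Lipschitz condition on $\nabla f$: each shifted function $\mathbf{x} \mapsto f(\mathbf{x}+\mu\mathbf{u})$ inherits smoothness constant $L$, and averaging preserves it. For the identity \eqref{eq:zo-expectation}, I would differentiate under the expectation to obtain $\nabla f_\mu(\mathbf{x}) = \mathbb{E}_{\mathbf{u} \sim U_b}[\nabla f(\mathbf{x}+\mu\mathbf{u})]$, then rewrite $\nabla f(\mathbf{x}+\mu\mathbf{u}) = \tfrac{1}{\mu}\nabla_{\mathbf{u}} f(\mathbf{x}+\mu\mathbf{u})$ and convert the ball integral via the divergence theorem into a surface integral $\int_{\mathbb{S}^{d-1}} f(\mathbf{x}+\mu\mathbf{v})\,\mathbf{v}\, dS(\mathbf{v})$. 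Combining the surface-to-volume ratio $\mathrm{Area}(\mathbb{S}^{d-1})/\mathrm{Vol}(B) = d$ with the $1/\mu$ chain-rule factor produces the $d/\mu$ prefactor, and subtracting $f(\mathbf{x})\mathbb{E}[\mathbf{v}] = 0$ recovers the two-point estimator from \eqref{eq:zo_two_point}.

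Part (ii) reduces to Taylor-remainder estimates driven by $L$-smoothness. For \eqref{eq:bound1}, the pointwise bound $|f(\mathbf{x}+\mu\mathbf{u}) - f(\mathbf{x}) - \mu\nabla f(\mathbf{x})^T\mathbf{u}| \leq \tfrac{L\mu^2}{2}\|\mathbf{u}\|^2$ integrates against $U_b$ with the linear term vanishing by symmetry ($\mathbb{E}[\mathbf{u}]=0$) and $\mathbb{E}\|\mathbf{u}\|^2 \leq 1$. For \eqref{eq:bound2}, I would use the surface representation from (i): since $d\,\mathbb{E}_{\mathbf{v}}[\mathbf{v}\mathbf{v}^T] = I$ on the sphere, the linear part of the Taylor expansion exactly reproduces $\nabla f(\mathbf{x})$, yielding $\nabla f_\mu(\mathbf{x}) - \nabla f(\mathbf{x}) = \tfrac{d}{\mu}\mathbb{E}_{\mathbf{v}}[(f(\mathbf{x}+\mu\mathbf{v})-f(\mathbf{x})-\mu\nabla f(\mathbf{x})^T\mathbf{v})\mathbf{v}]$; applying the quadratic remainder bound on the unit sphere then produces the factor $\mu^2 L^2 d^2/4$. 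Bound \eqref{eq:bound3} follows immediately by applying Young's inequality in both directions to $\nabla f(\mathbf{x}) = \nabla f_\mu(\mathbf{x}) + (\nabla f(\mathbf{x}) - \nabla f_\mu(\mathbf{x}))$ and invoking \eqref{eq:bound2}.

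For part (iii), I would apply the standard inequality $\mathbb{E}\|X - \mathbb{E}X\|^2 \leq \mathbb{E}\|X\|^2$ with $X = \hat\nabla f(\mathbf{x})$, valid because $\mathbb{E} X = \nabla f_\mu(\mathbf{x})$ by part (i), reducing the claim to bounding $\mathbb{E}\|\hat\nabla f(\mathbf{x})\|^2 = \tfrac{d^2}{\mu^2}\mathbb{E}_{\mathbf{u}}[(f(\mathbf{x}+\mu\mathbf{u})-f(\mathbf{x}))^2]$ with $\mathbf{u}$ uniform on $\mathbb{S}^{d-1}$. Splitting $f(\mathbf{x}+\mu\mathbf{u})-f(\mathbf{x}) = \mu\nabla f(\mathbf{x})^T\mathbf{u} + R(\mathbf{x},\mathbf{u})$ with $|R| \leq \tfrac{L\mu^2}{2}$, squaring via $(a+b)^2 \leq 2a^2 + 2b^2$, and using the spherical moment identity $\mathbb{E}[(\nabla f(\mathbf{x})^T\mathbf{u})^2] = \|\nabla f(\mathbf{x})\|_2^2/d$ delivers the two claimed terms $2d\|\nabla f(\mathbf{x})\|_2^2$ and $\mu^2 L^2 d^2/2$. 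The main obstacle I anticipate is the careful bookkeeping in the Stokes argument of part (i), particularly tracking the surface-to-volume ratio and the $1/\mu$ Jacobian together to produce exactly the $d/\mu$ prefactor that makes the identity match the two-point estimator; once that identity is cleanly established, every other step is routine combination of $L$-smoothness and the spherical moments $\mathbb{E}[\mathbf{u}] = 0$, $\mathbb{E}[\mathbf{u}\mathbf{u}^T] = I/d$.
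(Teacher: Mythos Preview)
Your proposal is correct and follows the standard route for these randomized-smoothing bounds (divergence theorem for the surface/ball identity, Taylor remainders under $L$-smoothness, spherical second moments). Note that the paper itself does not give a self-contained proof of this lemma: it simply cites Lemma~1 of \cite{liu2018zeroth}, so you are in fact supplying more detail than the paper does, and your derivation matches what one finds in that reference.
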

\begin{proof}
    See the proof of Lemma 1 in~\citep{liu2018zeroth}
\end{proof} 
\begin{lemma}
\label{lem:2p-estimate}
Under the conditions of Lemma~\ref{lem:zo-gradient}:
\begin{enumerate}[label=(\roman*), nosep]
\item For any $\mathbf{x}\in\mathbb{R}^d$:
\begin{equation}
\nabla f_\mu(\mathbf{x}) = \mathbb{E}_{\mathbf{u}}\bigl[\hat{\nabla}f(\mathbf{x})\bigr].
\label{eq:2p-expectation}
\end{equation}
where $\hat{\nabla}f(\mathbf{x})$ is the multi-point gradient estimate.
\item For any $\mathbf{x}\in\mathbb{R}^d$:
\begin{equation}
\mathbb{E}\bigl[\|\hat{\nabla}f(\mathbf{x}) - \nabla f_\mu(\mathbf{x})\|_2^2\bigr] \leq \frac{2d}{q}\|\nabla f(\mathbf{x})\|_2^2 + \frac{\mu^2L^2d^2}{2q}.
\label{eq:multi-points}
\end{equation}
\end{enumerate}
\end{lemma}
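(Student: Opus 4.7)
The plan is to reduce the multi-point estimator to an average of $q$ i.i.d.\ copies of the two-point estimator already analyzed in Lemma~\ref{lem:zo-gradient}, and then exploit standard facts about the mean and variance of an i.i.d.\ sum. Concretely, define $\hat{\nabla}_j f(\mathbf{x}) := \frac{d}{\mu}\bigl[f(\mathbf{x}+\mu\mathbf{u}_j) - f(\mathbf{x})\bigr]\mathbf{u}_j$ for each $j\in[q]$, so the multi-point estimator from Eq.~\eqref{eq:zo_multi_point} (instantiated at $f$ rather than a component $f_i$) can be rewritten as $\hat{\nabla} f(\mathbf{x}) = \frac{1}{q}\sum_{j=1}^{q}\hat{\nabla}_j f(\mathbf{x})$, where the $\mathbf{u}_j$ are i.i.d.\ uniform on $\mathbb{S}^{d-1}$.

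For part (i), I would apply linearity of expectation together with Lemma~\ref{lem:zo-gradient}(i), which gives $\mathbb{E}[\hat{\nabla}_j f(\mathbf{x})] = \nabla f_\mu(\mathbf{x})$ for every $j$. Averaging over $j$ immediately yields $\mathbb{E}[\hat{\nabla} f(\mathbf{x})] = \nabla f_\mu(\mathbf{x})$, establishing Eq.~\eqref{eq:2p-expectation}.

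For part (ii), the key observation is that $\hat{\nabla}_1 f(\mathbf{x}),\dots,\hat{\nabla}_q f(\mathbf{x})$ are i.i.d.\ and each has mean $\nabla f_\mu(\mathbf{x})$ by part (i). Therefore the centered vectors $Z_j := \hat{\nabla}_j f(\mathbf{x}) - \nabla f_\mu(\mathbf{x})$ are i.i.d.\ and zero-mean, so the cross terms $\mathbb{E}[\langle Z_i, Z_j\rangle] = 0$ for $i\neq j$ by independence. Expanding
\begin{equation*}
\mathbb{E}\bigl[\|\hat{\nabla} f(\mathbf{x}) - \nabla f_\mu(\mathbf{x})\|_2^2\bigr] = \frac{1}{q^2}\sum_{i,j=1}^{q}\mathbb{E}[\langle Z_i, Z_j\rangle] = \frac{1}{q}\,\mathbb{E}\bigl[\|Z_1\|_2^2\bigr],
\end{equation*}
and then invoking the two-point variance bound Eq.~\eqref{eq:variance-bound} from Lemma~\ref{lem:zo-gradient}(iii) to upper bound $\mathbb{E}[\|Z_1\|_2^2]$ by $2d\|\nabla f(\mathbf{x})\|_2^2 + \frac{\mu^2 L^2 d^2}{2}$, gives exactly Eq.~\eqref{eq:multi-points}.

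There is no real obstacle here; the proof is essentially the variance-of-the-mean calculation combined with the single-direction bound from the previous lemma. The only thing worth being careful about is to clearly invoke the independence of the $\mathbf{u}_j$ when killing the cross terms, and to state the rewrite of $\hat{\nabla} f(\mathbf{x})$ as an average of two-point estimates explicitly so that Lemma~\ref{lem:zo-gradient} applies term-by-term.
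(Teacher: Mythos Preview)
Your proposal is correct and is precisely the standard argument: rewrite the multi-point estimator as an average of $q$ i.i.d.\ two-point estimators, use linearity of expectation for part~(i), and the variance-of-the-mean identity together with Lemma~\ref{lem:zo-gradient}(iii) for part~(ii). The paper itself does not spell out a proof but simply defers to Lemma~2 of \cite{liu2018zeroth}, whose argument is exactly the one you outline.
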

\begin{proof}
    See the proof of Lemma 2 in~\citep{liu2018zeroth}
\end{proof}
\section{Second-Order Moment of the Hybrid Gradient Estimator}
\label{appendix:pro1}
The primary goal of our convergence analysis is to establish theoretical guarantees for VAMO in solving non-convex optimization problems. Specifically, we aim to bound the expected squared norm of the gradient, $\mathbb{E}[\|\nabla f(\bar{\mathbf{x}})\|_2^2]$, as shown in Theorem~\ref{theorem:main}. Due to the hybrid structure of the gradient estimator $\mathbf{v}_k^s$ used in VAMO, directly analyzing the final convergence metric is challenging. As a key intermediate step, we first derive an upper bound on the second-order moment $\mathbb{E}[\|\mathbf{v}_k^s\|_2^2]$.

\begin{proposition}
\label{prop:proposition1}
Under the assumptions in Section \ref{sec:assumptions}, and two-point ZO gradient estimate is used in Algorithm~\ref{alg:fo-zo-svrg}. The blended gradient $\mathbf{v}_{k}^s$ in Step 7 of Algorithm~\ref{alg:fo-zo-svrg} satisfies,
\begin{equation}  
\begin{split}    
\mathbb{E}\bigl[\|{\mathbf{v}}_k^s\|_2^2\bigr]  
&\leq 4\biggl(2\alpha^2-2\alpha+1+\frac{24d\delta_n}{b}\alpha^2\biggr)\mathbb{E}\bigl[\|\nabla f(\mathbf{x}_k^s)\|_2^2\bigr] \\
&\quad + \frac{12\delta_n(4d+1)L^2}{b}\alpha^2\mathbb{E}\bigl[\|\mathbf{x}_0^s-\mathbf{x}_k^s\|_2^2\bigr] \\
&\quad + \frac{9\delta_n}{b}d^2L^2\mu^2\alpha^2 + \frac{4\sigma^2}{b}\biggl(24d\delta_n\alpha^2+(1-\alpha)^2\biggr),
\end{split}
\label{eq:prop1}  
\end{equation}    
 where $\delta_n = 1$ if the mini-batch contains $i.i.d$. samples from $[n]$ with replacement, and $\delta_n = I(b<n)$ if samples are randomly selected without replacement. Here $I(b < n)$ is 1 if $b < n$, and 0 if $b = n$. 
\end{proposition}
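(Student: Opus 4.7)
The plan is to exploit the fact that, by construction, the hybrid estimator is unbiased, $\mathbb{E}[\mathbf{v}_k^s]=\nabla f(\mathbf{x}_k^s)$, since the two ZO quantities $\hat{\nabla}f_{\mathcal{I}_k}(\mathbf{x}_0^s)$ and $\hat{g}_s$ share the common mean $\nabla f_\mu(\mathbf{x}_0^s)$. Starting from the convex-combination rewriting
\begin{equation*}
\mathbf{v}_k^s = (1-\alpha)\nabla f_{\mathcal{I}_k}(\mathbf{x}_k^s) + \alpha\bigl[\nabla f_{\mathcal{I}_k}(\mathbf{x}_k^s) - \hat{\nabla}f_{\mathcal{I}_k}(\mathbf{x}_0^s) + \hat{g}_s\bigr],
\end{equation*}
I would apply $\|A+B\|_2^2 \le 2\|A\|_2^2 + 2\|B\|_2^2$ to split the second moment into a pure FO piece and a SVRG-like piece. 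This particular split is what delivers the characteristic leading coefficient $(1-\alpha)^2 + \alpha^2 = 2\alpha^2 - 2\alpha + 1$ appearing in the proposition.

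For the pure-FO piece $\nabla f_{\mathcal{I}_k}(\mathbf{x}_k^s)$, one further Young's expansion together with Assumption~\ref{assump:variance} gives $\mathbb{E}[\|\nabla f_{\mathcal{I}_k}(\mathbf{x}_k^s)\|_2^2] \le 2\|\nabla f(\mathbf{x}_k^s)\|_2^2 + 2\delta_n\sigma^2/b$, which is what lifts the overall prefactor to $4$ in front of the $(1-\alpha)^2$ and $\alpha^2$ terms. For the SVRG-like piece $W := \nabla f_{\mathcal{I}_k}(\mathbf{x}_k^s) - \hat{\nabla}f_{\mathcal{I}_k}(\mathbf{x}_0^s) + \hat{g}_s$, unbiasedness yields $\mathbb{E}[\|W\|_2^2] \le 2\|\nabla f(\mathbf{x}_k^s)\|_2^2 + 2\mathbb{E}[\|W-\nabla f(\mathbf{x}_k^s)\|_2^2]$, and the centered residual decomposes further into the FO mini-batch noise $\nabla f_{\mathcal{I}_k}(\mathbf{x}_k^s) - \nabla f(\mathbf{x}_k^s)$ (again bounded by $\delta_n\sigma^2/b$) and the ZO-correction residual $\hat{g}_s - \hat{\nabla}f_{\mathcal{I}_k}(\mathbf{x}_0^s)$. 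The latter is bounded through Lemma~\ref{lem:zo-gradient}(iii) applied per-sample and summed over the independent ZO directions: combining $\mathbb{E}[\|\hat{\nabla}f_i(\mathbf{x})-\nabla f_{i,\mu}(\mathbf{x})\|_2^2] \le 2d\|\nabla f_i(\mathbf{x})\|_2^2 + \mu^2 L^2 d^2/2$ with the elementary consequence $\frac{1}{n}\sum_i\|\nabla f_i(\mathbf{x})\|_2^2 \le 2\|\nabla f(\mathbf{x})\|_2^2 + 2\sigma^2$ of Assumption~\ref{assump:variance}. Because this bound naturally lives at the checkpoint $\mathbf{x}_0^s$, a final application of $L$-smoothness, $\|\nabla f(\mathbf{x}_0^s)\|_2^2 \le 2\|\nabla f(\mathbf{x}_k^s)\|_2^2 + 2L^2\|\mathbf{x}_0^s - \mathbf{x}_k^s\|_2^2$, is precisely what produces the drift term $\mathbb{E}[\|\mathbf{x}_0^s-\mathbf{x}_k^s\|_2^2]$ with its $(4d+1)L^2\delta_n/b$ coefficient.

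The main obstacle lies in disentangling the three independent sources of randomness that enter the ZO-correction variance $\mathbb{E}[\|\hat{g}_s - \hat{\nabla}f_{\mathcal{I}_k}(\mathbf{x}_0^s)\|_2^2]$: the mini-batch index sampling (responsible for the $\delta_n/b$ scaling), the $n$ ZO random directions used to build $\hat{g}_s$, and the fresh $b$ ZO random directions used for the checkpoint mini-batch estimate. These must be handled in the right order—first conditioning on $\mathcal{I}_k$ to invoke independence of the ZO directions, then using the per-sample variance identity under Lemma~\ref{lem:zo-gradient}(iii), then taking expectation over $\mathcal{I}_k$ and converting $\|\nabla f_i\|_2^2 \to \|\nabla f\|_2^2 + \sigma^2$—so that the resulting contributions cleanly populate the four buckets $\|\nabla f(\mathbf{x}_k^s)\|_2^2$, $\|\mathbf{x}_0^s-\mathbf{x}_k^s\|_2^2$, $\mu^2$, and $\sigma^2/b$ displayed in the statement. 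The bookkeeping is routine but intricate, and is what determines the explicit constants $24d$, $12(4d+1)$, $9$, and $4$; once these terms are collected, the claimed bound follows by simple algebra.
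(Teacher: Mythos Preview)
Your high-level decomposition matches the paper's: rewrite $\mathbf{v}_k^s$ as $(1-\alpha)\nabla f_{\mathcal{I}_k}(\mathbf{x}_k^s)+\alpha W$, apply Young's inequality, then split $W$ into its mean $\nabla f(\mathbf{x}_k^s)$ and a centered residual. Where you diverge from the paper is in how you treat that centered residual, and your divergence rests on a misreading of the algorithm.

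You assume three independent sources of randomness, in particular ``fresh $b$ ZO random directions used for the checkpoint mini-batch estimate.'' But in the paper's setup the directions $\{\mathbf{u}_i\}_{i\in[n]}$ drawn when forming $\hat g_s=\hat\nabla f(\mathbf{x}_0^s)$ are \emph{reused} for $\hat\nabla f_{\mathcal{I}_k}(\mathbf{x}_0^s)$; this is why the paper can write $\mathbb{E}_{\mathcal{I}_k}[\hat\nabla f_{\mathcal{I}_k}(\mathbf{x}_0^s)]=\hat\nabla f(\mathbf{x}_0^s)$ (conditioning on the ZO randomness), which is the identity that drives their proof. Consequently your plan to ``first condition on $\mathcal{I}_k$ to invoke independence of the ZO directions'' is inverted: there are only two sources of randomness (the shared $\{\mathbf{u}_i\}$ and the index set $\mathcal{I}_k$), and the natural order is to condition on $\mathbf{u}$ and apply the mini-batch variance lemma over $\mathcal{I}_k$.

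Relatedly, the paper does \emph{not} split $W-\nabla f(\mathbf{x}_k^s)$ into an FO-noise piece plus a separate ZO-correction piece. Instead it observes that $W-\nabla f(\mathbf{x}_k^s)=\bigl[\nabla f_{\mathcal{I}_k}(\mathbf{x}_k^s)-\hat\nabla f_{\mathcal{I}_k}(\mathbf{x}_0^s)\bigr]-\mathbb{E}_{\mathcal{I}_k}\bigl[\nabla f_{\mathcal{I}_k}(\mathbf{x}_k^s)-\hat\nabla f_{\mathcal{I}_k}(\mathbf{x}_0^s)\bigr]$ is centered over $\mathcal{I}_k$ and applies the mini-batch lemma once to obtain $\frac{\delta_n}{bn}\sum_i\mathbb{E}\|\nabla f_i(\mathbf{x}_k^s)-\hat\nabla f_i(\mathbf{x}_0^s)\|_2^2$. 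Each per-sample term is then handled by a three-way split through the smoothed function, $\nabla f_i(\mathbf{x}_k^s)\to\nabla f_{i,\mu}(\mathbf{x}_k^s)\to\nabla f_{i,\mu}(\mathbf{x}_0^s)\to\hat\nabla f_i(\mathbf{x}_0^s)$, invoking Lemma~\ref{lem:zo-gradient}(ii), $L$-smoothness of $f_{i,\mu}$, and Lemma~\ref{lem:zo-gradient}(iii) respectively. This bundled treatment is what delivers the exact constants $24d$, $12(4d+1)$, and $9$; your separate handling of the FO mini-batch noise would cost an extra Young's-inequality factor and would not recover the stated coefficients verbatim, though the order-wise bound would survive.
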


\begin{proof}
    In Algorithm~\ref{alg:fo-zo-svrg}, we recall that the mini-batch $\mathcal{I}$ is chosen uniformly randomly (with replacement). It is known from Lemma~\ref{lem:zo-gradient} and Lemma~\ref{lem:lemma3} that
    \begin{equation}
        \begin{aligned}
             \mathbb{E}_{\mathcal{I}_k} \left[\nabla {f}_{\mathcal{I}_k}(\mathbf{x}_k^s) - \hat\nabla {f}_{\mathcal{I}_k}(\mathbf{x}_0^s)\right] = \nabla f(\mathbf{x}_k^s) - \hat\nabla f(\mathbf{x}_0^s). \label{eq:expectation}
        \end{aligned}
    \end{equation}

    We then rewrite $\mathbf{v}_k^s$ as
    \begin{equation}
        \begin{aligned}
                 \mathbf{v}_k^s &=\left(1-\alpha\right)\nabla {f}_{\mathcal{I}_k}(\mathbf{x}_k^s)  \\
         &+ \alpha\left(\nabla {f}_{\mathcal{I}_k}(\mathbf{x}_k^s) -\hat\nabla {f}_{\mathcal{I}_k}(\mathbf{x}_0^s) - \mathbb{E}_{\mathcal{I}_k} \left[\nabla {f}_{\mathcal{I}_k}(\mathbf{x}_k^s) - \hat\nabla {f}_{\mathcal{I}_k}(\mathbf{x}_0^s)\right] + \nabla f(\mathbf{x}_k^s) \right)
        \end{aligned}
    \end{equation}
    Taking the expectation of $\left\|\mathbf{v}_k^s\right\|_2^2$ with respect to all the random variables, we have 
    \begin{equation}
        \begin{aligned}
             \mathbb{E} \left[\left\|\mathbf{v}_k^s\right\|_2^2\right] &\leq 2\left(1-\alpha\right)^2\mathbb{E}\left[\left\|\nabla {f}_{\mathcal{I}_k}(\mathbf{x}_k^s) \right\|_2^2\right] \\
         &+2\alpha^2\mathbb{E}\left[\left\|\nabla {f}_{\mathcal{I}_k}(\mathbf{x}_k^s) -\hat\nabla {f}_{\mathcal{I}_k}(\mathbf{x}_0^s) - \mathbb{E}_{\mathcal{I}_k} \left[\nabla {f}_{\mathcal{I}_k}(\mathbf{x}_k^s) - \hat\nabla {f}_{\mathcal{I}_k}(\mathbf{x}_0^s)\right] + \nabla f(\mathbf{x}_k^s) \right\|_2^2\right] \\
         &\leq 4\alpha^2 \mathbb{E} \left[ \left\|\nabla {f}_{\mathcal{I}_k}(\mathbf{x}_k^s) -\hat\nabla {f}_{\mathcal{I}_k}(\mathbf{x}_0^s) - \mathbb{E}_{\mathcal{I}_k} \left[\nabla {f}_{\mathcal{I}_k}(\mathbf{x}_k^s) - \hat\nabla {f}_{\mathcal{I}_k}(\mathbf{x}_0^s)\right]\right\|_2^2 \right] \\
         &+  4\alpha^2 \mathbb{E} \left[\left\|\nabla f(\mathbf{x}_k^s)\right\|_2^2 \right] + 2\left(1-\alpha\right)^2\mathbb{E}\left[\left\|\nabla {f}_{\mathcal{I}_k}(\mathbf{x}_k^s) \right\|_2^2\right] \label{eq:v_k}
        \end{aligned}
    \end{equation}
    where the first inequality holds due to Lemma~\ref{lem:lemma4}. Based on \eqref{eq:expectation}, we note that the following holds
    \begin{equation}
        \begin{aligned}
               &\sum_{i=1}^n \left\{\nabla f_i(\mathbf{x}_k^s) - \hat\nabla f_i(\mathbf{x}_0^s) - \mathbb{E}_{\mathcal{I}_k} \left[\nabla f_{\mathcal{I}_k}(\mathbf{x}_k^s) - \hat\nabla f_{\mathcal{I}_k}(\mathbf{x}_0^s)\right]\right\} \\
         &= n(\nabla f(\mathbf{x}_k^s) - \hat\nabla f(\mathbf{x}_0^s)) - n(\nabla f(\mathbf{x}_k^s) - \hat\nabla f(\mathbf{x}_0^s)) = 0. \label{eq:batch}
        \end{aligned}
    \end{equation}
    Based on \eqref{eq:batch} and applying Lemma~\ref{lem:zo-gradient} and Lemma~\ref{lem:lemma3}, the first term at the right hand side (RHS) of \eqref{eq:v_k} yields
    \begin{equation}
        \begin{aligned}
             &\mathbb{E} \left[\left\|\nabla f_{\mathcal{I}_k}(\mathbf{x}_k^s) - \hat\nabla f_{\mathcal{I}_k}(\mathbf{x}_0^s) - \mathbb{E}_{\mathcal{I}_k} \left[\nabla f_{\mathcal{I}_k}(\mathbf{x}_k^s) - \hat\nabla f_{\mathcal{I}_k}(\mathbf{x}_0^s)\right]\right\|_2^2\right]   \\
    & \leq\frac{\delta_n}{bn}\sum_{i=1}^n\mathbb{E}\left[\|{\nabla}f_i(\mathbf{x}_k^s)-\hat{\nabla}f_i(\mathbf{x}_0^s)-({\nabla}f(\mathbf{x}_k^s)-\hat{\nabla}f(\mathbf{x}_0^s))\|_2^2\right]  \\
    & =\mathbb{E}\left[\frac{\delta_n}{b}\left(\frac{1}{n}\sum_{i=1}^n\|{\nabla}f_i(\mathbf{x}_k^s)-\hat{\nabla}f_i(\mathbf{x}_0^s)\|_2^2-\|{\nabla}f(\mathbf{x}_k^s)-\hat{\nabla}f(\mathbf{x}_0^s)\|_2^2\right)\right] \\
    &\leq \frac{\delta_n}{b n} \sum_{i=1}^n\mathbb{E} \left[\left\|\nabla f_i(\mathbf{x}_k^s) - \hat\nabla f_i(\mathbf{x}_0^s)\right\|_2^2\right]. \label{eq:first_term_vk}
        \end{aligned}
    \end{equation}
    where the first inequality holds due to Lemma~\ref{lem:zo-gradient} and Lemma~\ref{lem:lemma3} (taking the expectation with respect to mini-batch $\mathcal{I}$), we define $\delta_n$ as
    \begin{equation}
        \begin{aligned}
             \delta_n = \left\{
            \begin{array}{ll}
            1 & \text{if } \mathcal{I} \text{ contains i.i.d. samples with replacement (Lemma~\ref{lem:lemma3})} \\
            I(b<n) & \text{if } \mathcal{I} \text{ contains samples without replacement (Lemma~\ref{lem:lemma4})}.
            \end{array}
            \right.
        \end{aligned}
    \end{equation}
    Substituting \eqref{eq:first_term_vk} into \eqref{eq:v_k}, we obtain
    \begin{equation}
        \begin{aligned}
             \mathbb{E} \left[\left\|\mathbf{v}_k^s\right\|_2^2\right] &\leq 2\left(1-\alpha\right)^2\mathbb{E}\left[\left\|\nabla {f}_{\mathcal{I}_k}(\mathbf{x}_k^s) \right\|_2^2\right] \\
         &+\frac{4\alpha^2 \delta_n}{b n} \sum_{i=1}^n\mathbb{E} \left[\left\|\nabla f_i(\mathbf{x}_k^s) - \hat\nabla f_i(\mathbf{x}_0^s)\right\|_2^2\right] + 4\alpha^2 \mathbb{E} \left[\left\|\nabla f(\mathbf{x}_k^s)\right\|_2^2\right]. \label{eq:vk_2}
        \end{aligned}
    \end{equation}
    
    Similar to Lemma~\ref{lem:zo-gradient}, we introduce a smoothing function $f_{i,\mu}$ of $f_i$, and continue to bound the second term at the right hand side (RHS) of \eqref{eq:vk_2}. This yields
    \begin{equation}
        \begin{aligned}
             &\mathbb{E}\left[\|{\nabla}f_i(\mathbf{x}_k^s)-\hat{\nabla}f_i(\mathbf{x}_0^s)\|_2^2\right]  \\
            &\leq 3\mathbb{E}\left[\|{\nabla}f_{i}(\mathbf{x}_{k}^{s})-\nabla f_{i,\mu}(\mathbf{x}_{k}^{s})\|_{2}^{2}\right]+3\mathbb{E}\left[\|{\nabla}f_{i,\mu}(\mathbf{x}_{0}^{s})-\hat\nabla f_{i}(\mathbf{x}_{0}^{s})\|_{2}^{2}\right] \\
            &+3\mathbb{E}\left[\|\nabla f_{i,\mu}(\mathbf{x}_k^s)-\nabla f_{i,\mu}(\mathbf{x}_0^s)\|_2^2\right]  \\
            & \leq 6d\mathbb{E}[\|\nabla f_i(\mathbf{x}_0^s)\|_2^2]+\frac{9}{4} L^2d^2\mu^2+3\mathbb{E}\left[\|\nabla f_{i,\mu}(\mathbf{x}_k^s)-\nabla f_{i,\mu}(\mathbf{x}_0^s)\|_2^2\right]
        \end{aligned}
    \end{equation}
    Since both $f_i$ and $f_{i,\mu}$ are L-smooth (Lemma~\ref{lem:zo-gradient}), we have 
    \begin{equation}
        \begin{aligned}
             &\mathbb{E}\left[\|\nabla f_{i,\mu}(\mathbf{x}_k^s)-\nabla f_{i,\mu}(\mathbf{x}_0^s)\|_2^2\right]  \leq L^2\mathbb{E}\left[\|\mathbf{x}_k^s-\mathbf{x}_0^s\|_2^2\right], \\
                &\mathbb{E}\left[\|\nabla f_i(\mathbf{x}_0^s)\|_2^2\right]  \leq2\mathbb{E}\left[\|\nabla f_i(\mathbf{x}_0^s)-\nabla f_i(\mathbf{x}_k^s)\|_2^2\right]+2\mathbb{E}\left[\|\nabla f_i(\mathbf{x}_k^s)\|_2^2\right] \\
                & \leq2L^2\mathbb{E}\left[\|\mathbf{x}_0^s-\mathbf{x}_k^s\|_2^2\right]+2\mathbb{E}\left[\|\nabla f_i(\mathbf{x}_k^s)\|_2^2\right].
        \end{aligned}
    \end{equation}
    We obtain
    \begin{equation}
        \begin{aligned}
             & \mathbb{E}\left[\|{\nabla}f_i(\mathbf{x}_k^s)-\hat{\nabla}f_i(\mathbf{x}_0^s)\|_2^2\right]  \\
            & \leq 12d\mathbb{E}[\|\nabla f_i(\mathbf{x}_k^s)\|_2^2] +(12d+3)L^2\mathbb{E}\left[\|\mathbf{x}_0^s-\mathbf{x}_k^s\|_2^2\right]+ \frac{9}{4} L^2d^2\mu^2 \\
            & \leq24d\mathbb{E}\left[\|\nabla f_i(\mathbf{x}_k^s)-\nabla f(\mathbf{x}_k^s)\|_2^2\right]+24d\mathbb{E}\left[\|\nabla f(\mathbf{x}_k^s)\|_2^2\right] \\ 
            &+(12d+3)L^2\mathbb{E}\left[\|\mathbf{x}_0^s-\mathbf{x}_k^s\|_2^2\right]+\frac{9}{4}L^2d^2\mu^2\\
             & \leq24d \sigma^2+24d\mathbb{E}\left[\|\nabla f(\mathbf{x}_k^s)\|_2^2\right] +(12d+3)L^2\mathbb{E}\left[\|\mathbf{x}_0^s-\mathbf{x}_k^s\|_2^2\right]+\frac{9}{4}L^2d^2\mu^2, 
        \end{aligned}
    \end{equation}
    where the last inequality holds due to Assumption in Section~\ref{sec:assumptions}.\\
    We bound the first term at the right hand side (RHS) of \eqref{eq:vk_2}. This yields
    \begin{equation}
        \begin{aligned}
             \mathbb{E}\left[\left\|\nabla {f}_{\mathcal{I}_k}(\mathbf{x}_k^s) \right\|_2^2\right] &\leq 
        2\mathbb{E}\left[\left\|\nabla {f}_{\mathcal{I}_k}(\mathbf{x}_k^s) -\nabla {f}(\mathbf{x}_k^s)\right\|_2^2\right]+
        2\mathbb{E}\left[\left\|\nabla {f}(\mathbf{x}_k^s)\right\|_2^2\right] \\
        &\leq \frac{2}{b}\sigma^2+ 2\mathbb{E}\left[\left\|\nabla {f}(\mathbf{x}_k^s)\right\|_2^2\right] 
        \label{eq:batch_variance}
        \end{aligned}
    \end{equation}
    Therefore, we have
    \begin{equation}
        \begin{aligned}
              \mathbb{E}\left[\|{\mathbf{v}}_k^s\|_2^2\right] & \leq  \frac{4(1-\alpha)^2}{b}\sigma^2+ 4(1-\alpha)^2\mathbb{E}\left[\left\|\nabla {f}(\mathbf{x}_k^s)\right\|_2^2\right] \\
            &+\frac{12\delta_n(4d+1)L^2}{b}\alpha^2\mathbb{E}\left[\|\mathbf{x}_0^s-\mathbf{x}_k^s\|_2^2\right] 
              +\left(4+\frac{96d\delta_n}{b}\right)\alpha^2\mathbb{E}\left[\|\nabla f(\mathbf{x}_k^s)\|_2^2\right]\\
              &+\frac{9\delta_n}{b}d^2L^2\mu^2\alpha^2+\frac{96d\sigma^2\delta_n}{b}\alpha^2. \\
              &= 4\left(2\alpha^2-2\alpha+1+\frac{24d\delta_n}{b}\alpha^2\right)\mathbb{E}\left[\|\nabla f(\mathbf{x}_k^s)\|_2^2\right] \\
              &+ \frac{12\delta_n(4d+1)L^2}{b}\alpha^2\mathbb{E}\left[\|\mathbf{x}_0^s-\mathbf{x}_k^s\|_2^2\right] \\
              & +\frac{9\delta_n}{b}d^2L^2\mu^2\alpha^2+\frac{4\sigma^2}{b}\left(24d\delta_n\alpha^2+(1-\alpha)^2\right).
        \end{aligned}
    \end{equation}
\end{proof}

The bound on $\mathbb{E}[\|\mathbf{v}_k^s\|_2^2]$, detailed in Proposition~\ref{prop:proposition1}, plays a central role in our analysis. It enables us to control the error accumulation during the optimization process and ultimately leads to the convergence rate stated in Theorem~\ref{theorem:main}. Based on Proposition~\ref{prop:proposition1}, Theorem~\ref{theorem:main} provides the convergence rate of VAMO in terms of an upper bound on $\mathbb{E}\bigl[\|\nabla{f}(\bar{\mathbf{x}})\|_2^2\bigr]$ at the solution $\bar{\mathbf{x}}$. 
\section{Proof of Theorem 1}
\label{appendix:proof_the1}
\begin{proof}
    Since $f$ is L-smooth (Lemma~\ref{lem:zo-gradient}), from Lemma~\ref{lem:lemma5} we have
    \begin{equation}
        \begin{aligned}
             f(\mathbf{x}_k^{s+1}) &\leq f(\mathbf{x}_k^s) + \langle \nabla f(\mathbf{x}_k^s), \mathbf{x}_k^{s+1} - \mathbf{x}_k^s \rangle + \frac{L}{2} \|\mathbf{x}_k^{s+1} - \mathbf{x}_k^s\|_2^2 \\
        &= f(\mathbf{x}_k^s) - \eta_k \langle \nabla f(\mathbf{x}_k^s), \mathbf{v}_k^s \rangle + \frac{L}{2} \eta_k^2 \| \mathbf{v}_k^s \|_2^2 \label{eq:taylor}
        \end{aligned}
    \end{equation}
    where the last equality holds due to $x_{k+1}^s = x_k^s - \eta_k v_k^s$. Since $x_k^s$ and $x_0^s$ are independent of $\mathcal{I}$ and random directions $u$ used for ZO gradient estimates, from \eqref{eq:zo-expectation} we obtain
    \begin{equation}
        \begin{aligned}
             \mathbb{E}_{\mathbf{u},\mathcal{I}_k}\left[{\mathbf{v}}_k^s\right]= & \mathbb{E}_{\mathbf{u},\mathcal{I}_k}\left[{\nabla}f_{\mathcal{I}_k}(\mathbf{x}_k^s)-\alpha\left(\hat{\nabla}f_{\mathcal{I}_k}(\mathbf{x}_0^s)-\hat{\nabla}f(\mathbf{x}_0^s)\right)\right] \\
        = & \nabla f(\mathbf{x}_k^s)-\alpha\left(\nabla f_\mu(\mathbf{x}_0^s)-\nabla f_\mu(\mathbf{x}_0^s)\right)=\nabla f(\mathbf{x}_k^s).
        \label{eq:expectation_vk}
        \end{aligned}
    \end{equation}
    Combining \eqref{eq:taylor} and \eqref{eq:expectation_vk}, we have
    \begin{equation}
        \begin{aligned}
            \mathbb{E}\left[f(\mathbf{x}_{k+1}^s)\right]\leq\mathbb{E}\left[f(\mathbf{x}_k^s)\right]-\eta_k\mathbb{E}\left[\|\nabla f(\mathbf{x}_k^s)\|_2^2\right]+\frac{L}{2}\eta_k^2\mathbb{E}\left[\|{\mathbf{v}}_k^s\|_2^2\right], \label{eq:taylor2}
        \end{aligned}
    \end{equation}
    where the expectation is taken with respect to all random variables. \\
    At RHS of \eqref{eq:taylor2}, the upper bound on $\mathbb{E}\left[\|{\mathbf{v}}_k^s\|_2^2\right]$ is given by Proposition~\ref{prop:proposition1},
    \begin{equation}
        \begin{aligned}
            \mathbb{E}\left[\|{\mathbf{v}}_k^s\|_2^2\right] & \leq 4\left(2\alpha^2-2\alpha+1+\frac{24d\delta_n}{b}\alpha^2\right)\mathbb{E}\left[\|\nabla f(\mathbf{x}_k^s)\|_2^2\right] \\
          &+ \frac{12\delta_n(4d+1)L^2}{b}\alpha^2\mathbb{E}\left[\|\mathbf{x}_0^s-\mathbf{x}_k^s\|_2^2\right] \\
          & +\frac{9\delta_n}{b}d^2L^2\mu^2\alpha^2+\frac{4\sigma^2}{b}\left(24d\delta_n\alpha^2+(1-\alpha)^2\right). \label{eq:vks}
        \end{aligned}
    \end{equation}
    In \eqref{eq:taylor2}, we further bound $\mathbb{E}\left[\|\mathbf{x}_{k+1}^s-\mathbf{x}_0^s\|_2^2\right]$ as,
    \begin{equation}
        \begin{aligned}
            & \mathbb{E}\left[\|\mathbf{x}_{k+1}^s-\mathbf{x}_0^s\|_2^2\right]=\mathbb{E}\left[\|\mathbf{x}_{k+1}^s-\mathbf{x}_k^s+\mathbf{x}_k^s-\mathbf{x}_0^s\|_2^2\right] \\
             &= \eta_k^2\mathbb{E}\left[\|{\mathbf{v}}_k^s\|_2^2\right]+\mathbb{E}\left[\|\mathbf{x}_k^s-\mathbf{x}_0^s\|_2^2\right]-2\eta_k\mathbb{E}\left[\langle{\mathbf{v}}_k^s,\mathbf{x}_k^s-\mathbf{x}_0^s\rangle\right] \\
             &=\eta_k^2\mathbb{E}\left[\|{\mathbf{v}}_k^s\|_2^2\right]+\mathbb{E}\left[\|\mathbf{x}_k^s-\mathbf{x}_0^s\|_2^2\right]-2\eta_k\mathbb{E}\left[\langle\nabla f(\mathbf{x}_k^s),\mathbf{x}_k^s-\mathbf{x}_0^s\rangle\right] \\
             & \leq\eta_k^2\mathbb{E}\left[\|{\mathbf{v}}_k^s\|_2^2\right]+\mathbb{E}\left[\|\mathbf{x}_k^s-\mathbf{x}_0^s\|_2^2\right]+2\eta_k\mathbb{E}\left[\frac{1}{2\beta_k}\|\nabla f(\mathbf{x}_k^s)\|_2^2+\frac{\beta_k}{2}\|\mathbf{x}_k^s-\mathbf{x}_0^s\|_2^2\right], \label{eq:lyapunov2}
        \end{aligned}
    \end{equation}
    We introduce a Lyapunov function with respect to $f_\mu$,
    \begin{align}
        R_k^s=\mathbb{E}\left[f(\mathbf{x}_k^s)+c_k\|\mathbf{x}_k^s-\mathbf{x}_0^s\|_2^2\right], \label{eq:ly_func}
    \end{align}
    for some $c_k>0$, Substituting \eqref{eq:taylor2} and \eqref{eq:lyapunov2} into $R_{k+1}^s$, we obtain
    \begin{equation}
        \begin{aligned}
             R_{k+1}^s & = \mathbb{E}\left[f(\mathbf{x}_{k+1}^s)+c_{k+1}\|\mathbf{x}_{k+1}^s-\mathbf{x}_0^s\|_2^2\right] \\
             & \leq\mathbb{E}\left[f(\mathbf{x}_k^s)-\eta_k\|\nabla f(\mathbf{x}_k^s)\|_2^2+\frac{L}{2}\eta_k^2\|{\mathbf{v}}_k^s\|_2^2\right] +\mathbb{E}\left[c_{k+1}\eta_k^2\|{\mathbf{v}}_k^s\|_2^2+c_{k+1}\|\mathbf{x}_k^s-\mathbf{x}_0^s\|_2^s\right] \\
             & +\mathbb{E}\left[\frac{c_{k+1}\eta_k}{\beta_k}\|\nabla f(\mathbf{x}_k^s)\|_2^2+c_{k+1}\beta_k\eta_k\|\mathbf{x}_k^s-\mathbf{x}_0^s\|_2^2\right] \\
             &= \mathbb{E}\left[f(\mathbf{x}_k^s)\right]-\left(\eta_k-\frac{c_{k+1}\eta_k}{\beta_k}\right)\mathbb{E}\left[\|\nabla f(\mathbf{x}_k^s)\|_2^2\right] \\
             & +\left(c_{k+1}+c_{k+1}\beta_k\eta_k\right)\mathbb{E}\left[\|\mathbf{x}_k^s-\mathbf{x}_0^s\|_2^2\right]+\left(\frac{L}{2}\eta_k^2+c_{k+1}\eta_k^2\right)\mathbb{E}\left[\|{\mathbf{v}}_k^s\|_2^2\right]. \label{eq:lyapunov}
        \end{aligned}
    \end{equation}
    Moreover, substituting \eqref{eq:vks} into \eqref{eq:lyapunov}, we have
    \begin{equation}
        \begin{aligned}
              R_{k+1}^{s}\leq & \mathbb{E}\left[f(\mathbf{x}_{k}^{s})\right]\boldsymbol{-}\left(\eta_{k}-\frac{c_{k\boldsymbol{+}1}\eta_{k}}{\beta_{k}}\right)\mathbb{E}\left[\|\nabla f_{}(\mathbf{x}_{k}^{s})\|_{2}^{2}\right]+\left(c_{k\boldsymbol{+}1}+c_{k\boldsymbol{+}1}\beta_{k}\eta_{k}\right)\mathbb{E}\left[\|\mathbf{x}_{k}^{s}-\mathbf{x}_{0}^{s}\|_{2}^{2}\right]  \\
                & +\left(\frac{L}{2}\eta_k^2+c_{k+1}\eta_k^2\right)\frac{12(4d+1)L^2\delta_n}{b}\alpha^2\mathbb{E}\left[\|\mathbf{x}_k^s-\mathbf{x}_0^s\|_2^2\right] \\
                 & +4\left(\frac{L}{2}\eta_k^2+c_{k+1}\eta_k^2\right)\left(2\alpha^2-2\alpha+1+\frac{24d\delta_n}{b}\alpha^2\right)\mathbb{E}\left[\|\nabla f(\mathbf{x}_k^s)\|_2^2\right] \\
                 & +\left(\frac{L}{2}\eta_k^2+c_{k+1}\eta_k^2\right)\left(\frac{9\delta_n}{b}d^2L^2\mu^2\alpha^2+\frac{4\sigma^2}{b}\left(24d\delta_n\alpha^2+(1-\alpha)^2\right)\right). \label{eq:lyapunov3}
        \end{aligned}
    \end{equation}
    The definition of $c_k$ is given by
    \begin{equation}
        c_k=c_{k+1}+\beta_k\eta_kc_{k+1}+\left(\frac{L}{2}\eta_k^2+c_{k+1}\eta_k^2\right)\frac{12(4d+1)L^2\delta_n}{b}\alpha^2
        \label{eq:c_K_definition}
    \end{equation}
    Based on the definition of $c_k$ and the  definition of $R_k^s$ in \eqref{eq:ly_func}, we can simplify the inequality \eqref{eq:lyapunov3} as
    \begin{equation}
        \begin{aligned}
             R_{k+1}^s & \leq R_k^s-\left(\eta_k-\frac{c_{k+1}\eta_k}{\beta_k}\right)\mathbb{E}\left[\|\nabla f(\mathbf{x}_k^s)\|_2^2\right]  \\
             & +4\left(\frac{L}{2}\eta_k^2+c_{k+1}\eta_k^2\right)\left(2\alpha^2-2\alpha+1+\frac{24d\delta_n}{b}\alpha^2\right)\mathbb{E}\left[\|\nabla f(\mathbf{x}_k^s)\|_2^2\right]  \\
             & +\left(\frac{L}{2}\eta_k^2+c_{k+1}\eta_k^2\right)\left(\frac{9\delta_n}{b}d^2L^2\mu^2\alpha^2+\frac{4\sigma^2}{b}\left(24d\delta_n\alpha^2+(1-\alpha)^2\right)\right) \\
            & =R_k^s-\gamma_k\mathbb{E}\left[\|\nabla f(\mathbf{x}_k^s)\|_2^2\right]+\chi_k, 
            \label{eq:x_y_definition}
        \end{aligned}
    \end{equation}
    where $\gamma_k$ and $\chi_k$ are coefficients given by
    \begin{equation}
        \begin{aligned}
             & \gamma_{k}=\left(1-\frac{c_{k+1}}{\beta_k}\right)\eta_k-4\left(\frac{L}{2}+c_{k+1}\right)\left(2\alpha^2-2\alpha+1+\frac{24d\delta_n}{b}\alpha^2\right)\eta_k^2, \\
             & \chi_{k}=\left(\frac{L}{2}+c_{k+1}\right)\left(\frac{9\delta_n}{b}d^2L^2\mu^2\alpha^2+\frac{4\sigma^2}{b}\left(24d\delta_n\alpha^2+(1-\alpha)^2\right)\right)\eta_k^2 \label{eq:lyapunov4}
        \end{aligned}
    \end{equation}
Taking a telescopic sum for \eqref{eq:lyapunov4}, we obtain
\begin{equation}
    \begin{aligned}
         R_m^s\leq R_0^s-\sum_{k=0}^{m-1}\gamma_k\mathbb{E}\left[\|\nabla f(\mathbf{x}_k^s)\|_2^2\right]+\chi_m, 
    \end{aligned}
\end{equation}
where $\chi_m=\sum_{k=0}^{m-1}\chi_k$. It is known from \eqref{eq:ly_func} that,
\begin{equation}
    \begin{aligned}
        R_0^s=\mathbb{E}\left[f(\mathbf{x}_0^s)\right],\quad R_m^s=\mathbb{E}\left[f(\mathbf{x}_m^s)\right],
    \end{aligned}
\end{equation}

where the last equality used the fact that $c_m=0$, since $\bar{\mathbf{x}}_{s\boldsymbol{-}1}=\mathbf{x}_0^s$ and $\bar{\mathbf{x}}_{s}=\mathbf{x}_m^s$, we obtain
\begin{equation}
    \begin{aligned}
          R_0^s-R_m^s=\mathbb{E}\left[f(\bar{\mathbf{x}}_{s-1})-f(\bar{\mathbf{x}}_s)\right]. 
    \end{aligned}
\end{equation}

Telescoping the sum for $s=1,2,\ldots,S$, we obtain,
\begin{equation}
    \begin{aligned}
         \sum_{s=1}^S\sum_{k=0}^{m-1}\gamma_k\mathbb{E}[\|\nabla f(\mathbf{x}_k^s)\|_2^2]\leq\mathbb{E}[f(\bar{\mathbf{x}}_0)-f(\bar{\mathbf{x}}_S)]+S\chi_m. 
    \end{aligned}
\end{equation}

let $\bar{\gamma}=\min_k\gamma_k$ and we choose $\bar{\mathbf{x}}$ uniformly random from $\{\{\mathbf{x}_k^s\}_{k=0}^{m-1}\}_{s=1}^S$, then we obtain
\begin{equation}
    \begin{aligned}
        \mathbb{E}[\|\nabla f(\bar{\mathbf{x}})\|_2^2]\leq\frac{\mathbb{E}[f(\bar{\mathbf{x}}_0)-f^*]}{T\bar{\gamma}}+\frac{S\chi_m}{T\bar{\gamma}}.
    \end{aligned}
\end{equation}
\end{proof}
\section{Proof of Corollary 1}
\label{appendix: proof_cor1}
\begin{proof}
    We start by rewriting $c_k$ in \eqref{eq:c_K_definition} as
    \begin{equation}
        \begin{aligned}
             c_k=(1+\theta)c_{k+1}+\frac{6(1+4d)L^3\delta_n\eta^2}{b}\alpha^2 \label{eq:recursive}
        \end{aligned}
    \end{equation}
    where $\theta=\beta\eta+\frac{12(1+4d)L^2\delta_n\eta^2}{b}\alpha^2$. The recursive formula \eqref{eq:recursive} implies that $c_k \leq c_0$ for any $k$, and
    \begin{equation}
        \begin{aligned}
             c_0=\frac{6(1+4d)L^3\delta_n\eta^2\alpha^2}{b}\frac{(1+\theta)^m-1}{\theta}. \label{eq:c0}
        \end{aligned}
    \end{equation}
    
    Based on the choice of $\eta=\frac{\rho }{L}$, $\alpha=\frac{1}{d}$, and $\beta=L$, we have
    \begin{equation}
        \begin{aligned}
             \theta=\rho +\frac{12(4d+1)\delta_n\rho^2}{bd^2} \label{eq:theta}
        \end{aligned}
    \end{equation}
    where we have used the fact that $\delta_{n}\leq1$, Substituting \eqref{eq:theta} into \eqref{eq:c0}, we have
    \begin{equation}
        \begin{aligned}
              c_{k}\leq c_{0}
                &=\frac{6(1+4d)L^3\delta_n\alpha^2}{b}\frac{\eta^2}{\theta}[(1+\theta)^m-1]=\frac{6(1+4d)L\rho\delta_n}{bd^2+12(4d+1)\delta_n\rho}[(1+\theta)^m-1] \\
                &\leq\frac{30L\rho\delta_n}{bd}[(1+\theta)^m-1]\leq \frac{30L\rho\delta_n}{bd}(e-1)\leq\frac{60L\rho\delta_n}{bd},  \label{eq:c0_bound}
        \end{aligned}
    \end{equation}
    where the third inequality holds since $(1+\theta)^m\leq(1+\frac{31\rho}{d})^m,(1+1/a)^a\leq \lim_{a\to\infty}(1+\frac{1}{a})^a=e\mathrm{~for~}a>0$, and the last inequality  loosely uses the notion '$\leq$' since $e<3$.\\
    We recall from \eqref{eq:x_y_definition} that
    \begin{equation}
        \begin{aligned}
             \bar{\gamma}=\min_{0\leq k\leq m-1}\left\{ \left(1-\frac{c_{k+1}}{\beta_k}\right)\eta_k-4\left(\frac{L}{2}+c_{k+1}\right)\left(2\alpha^2-2\alpha+1+\frac{24d\delta_n}{b}\alpha^2\right)\eta_k^2\right\}. 
        \end{aligned}
    \end{equation}
    Since $\eta_k=\eta$, $\beta_k=\beta$ and $\eta_k=\eta,\beta_k=\beta$, we have 
    \begin{equation}
        \begin{aligned}
            \bar{\gamma}\geq \left(1-\frac{c_{0}}{\beta}\right)\eta-4\left(\frac{L}{2}+c_{0}\right)\left(2\alpha^2-2\alpha+1+\frac{24d\delta_n}{b}\alpha^2\right)\eta^2. \label{eq:gamma_bound}
        \end{aligned}
    \end{equation}
    From \eqref{eq:c0_bound} and the definition of $\beta$, we have
    \begin{equation}
        \begin{aligned}
            &\frac{c_0}{\beta}\leq \frac{60\rho}{bd}, \\ 
            \label{eq:gamma_first_bound}
        \end{aligned}
    \end{equation}
    and
    \begin{equation}
        \begin{aligned}
             &\left(\frac{L}{2}+c_{0}\right)\left(2\alpha^2-2\alpha+1+\frac{24d\delta_n}{b}\alpha^2\right)\eta \\
            & \leq \left(\frac{L}{2}+\frac{60L\rho}{bd}\right)\left(\frac{2}{d^2}-\frac{2}{d}+1+\frac{24\delta_n}{bd}\right)\frac{\rho}{L}\\
            &\leq \rho\left(1+\frac{24}{bd}\right)
             \label{eq:gamma_second_bound}
        \end{aligned}
    \end{equation}
Substituting \eqref{eq:gamma_first_bound} and \eqref{eq:gamma_second_bound} into \eqref{eq:gamma_bound}, we obtain
\begin{equation}
    \begin{aligned}
         \bar{\gamma}\geq\eta\left(1-\frac{60\rho}{bd}-4\rho-\frac{96\rho}{bd}\right)\geq\eta\left(1-\frac{156\rho}{bd}-4\rho\right), 
    \end{aligned}
\end{equation}

 where we have used the fact that $b<d$. Moreover, if we set $\rho\leq \frac{1}{160}$ , then $\bar{\gamma}>0$. In other words, the current parameter setting is valid for Theorem~\ref{theorem:main}. Upon defining a universal constant $z_0=1-\frac{156\rho}{bd}-4\rho$, we have
 \begin{equation}
     \begin{aligned}
         \bar{\gamma}\geq \eta z_0 \label{eq:gamma_bound2}
     \end{aligned}
 \end{equation}
 
 Next, we find the upper bound on $\chi_{m}$ in \eqref{eq:x_y_definition} given the current parameter setting and $c_k\leq c_0$,
 \begin{equation}
     \begin{aligned}
         \chi_{m}\leq m\left(\frac{L}{2}+c_{0}\right)\left(\frac{9\delta_n}{b}d^2L^2\mu^2\alpha^2+\frac{4\sigma^2}{b}\left(24d\delta_n\alpha^2+(1-\alpha)^2\right)\right)\eta^2 
     \end{aligned}
 \end{equation}

 Based on $\bar{\gamma}\geq \eta z_0$ and $c_0\leq 60L\rho \delta_n \leq \frac{L}{2}$, we have
 \begin{equation}
     \begin{aligned}
         \frac{\chi_{m}}{\bar{\gamma}} \leq m\rho\left(\frac{9\delta_n}{bz_0}d^2L^2\mu^2\alpha^2+\frac{4\sigma^2}{bz_0}\left(24d\delta_n\alpha^2+(1-\alpha)^2\right)\right) 
     \end{aligned}
 \end{equation}
 
 since $T=Sm$, and $\mu=\frac{1}{\sqrt{T}}$,the above inequality yields
 \begin{equation}
     \begin{aligned}
         \frac{S\chi_{m}}{T\bar{\gamma}}\leq \frac{9\rho L^2\delta_n} {z_{0} b T}+\frac{4\sigma^2}{bz_0}\left(\frac{24\delta_n}{d}+(1-\frac{1}{d})^2\right)=O\left(\frac{1}{Tb}+\frac{1}{b}\right), \label{eq:chi_bound}
     \end{aligned}
 \end{equation}
 
 where in the big $O$ notation, we only keep the dominant terms and ignore the constant numbers that are independent of $d$, $b$, and $T$ .\\
 Substituting \eqref{eq:gamma_bound2} and \eqref{eq:chi_bound} into \eqref{eq:main_bound}, we have
 \begin{equation}
     \begin{aligned}
          \mathbb{E}[\|\nabla f(\overline{\mathbf{x}})\|_2^2]\leq\frac{[f(\bar{\mathbf{x}}_0)-f^*]}{Tz_0}\frac{L}{\rho}+\frac{S\chi_m}{T\bar{\gamma}}=O\left(\frac{1}{T}+\frac{1}{bT}+\frac{1}{b}\right). 
     \end{aligned}
 \end{equation}
 
\end{proof}

\section{Proof of Theorem 2}
\label{appendix:proof_the2}
\begin{proof}
    Motivated by Proposition~\ref{prop:proposition1}, we first bound $\left\|\mathbf{v}_k^s\right\|_2^2$, Following, we have
    \begin{equation}
        \begin{aligned}
              \mathbb{E} \left[\left\|\mathbf{v}_k^s\right\|_2^2\right] &\leq 2\left(1-\alpha\right)^2\mathbb{E}\left[\left\|\nabla {f}_{\mathcal{I}_k}(\mathbf{x}_k^s) \right\|_2^2\right] \\
         &+\frac{4\alpha^2 \delta_n}{b n} \sum_{i=1}^n\mathbb{E} \left[\left\|\nabla f_i(\mathbf{x}_k^s) - \hat\nabla f_i(\mathbf{x}_0^s)\right\|_2^2\right] + 4\alpha^2 \mathbb{E} \left[\left\|\nabla f(\mathbf{x}_k^s)\right\|_2^2\right]. \label{eq:multi_points_vk}
        \end{aligned}
    \end{equation}
    Following together with \eqref{eq:multi-points}, we can obtain that
    \begin{equation}
        \begin{aligned}
            & \mathbb{E}\left[\|{\nabla}f_i(\mathbf{x}_k^s)-\hat{\nabla}f_i(\mathbf{x}_0^s)\|_2^2\right]  \\
     & \leq\frac{24d}{q} \sigma^2+\frac{24d}{q}\mathbb{E}\left[\|\nabla f(\mathbf{x}_k^s)\|_2^2\right]\\
     &+\left(3+\frac{12d}{q}\right)L^2\mathbb{E}\left[\|\mathbf{x}_0^s-\mathbf{x}_k^s\|_2^2\right]+\left(\frac{3}{4}+\frac{3}{2q}\right)L^2d^2\mu^2, 
     \label{eq:multi_points_vk2}
        \end{aligned}
    \end{equation}
    Substituting \eqref{eq:multi_points_vk2} and \eqref{eq:batch_variance} into \eqref{eq:multi_points_vk}, we have:
    \begin{equation}
        \begin{aligned}
              \mathbb{E}\left[\|{\mathbf{v}}_k^s\|_2^2\right] & \leq 4\left(\left(2+\frac{24d\delta_n}{qb}\right)\alpha^2-2\alpha+1\right)\mathbb{E}\left[\|\nabla f(\mathbf{x}_k^s)\|_2^2\right] \\
          &+ \frac{12L^2\delta_n}{b}\left(1+\frac{4d}{q}\right)\alpha^2\mathbb{E}\left[\|\mathbf{x}_0^s-\mathbf{x}_k^s\|_2^2\right] 
           +\frac{3\delta_n}{b}\left(1+\frac{2}{q}\right)L^2d^2\mu^2\alpha^2\\
          &+\frac{4\sigma^2}{b}\left(\frac{24d\delta_n\alpha^2}{q}+(1-\alpha)^2\right). \label{eq:multi_points_vks}
        \end{aligned}
    \end{equation}
    Substituting \eqref{eq:multi_points_vks} into \eqref{eq:lyapunov}, we have:
    \begin{equation}
        \begin{aligned}
               R_{k+1}^{s}\leq & \mathbb{E}\left[f(\mathbf{x}_{k}^{s})\right]\boldsymbol{-}\left(\eta_{k}-\frac{c_{k\boldsymbol{+}1}\eta_{k}}{\beta_{k}}\right)\mathbb{E}\left[\|\nabla f_{}(\mathbf{x}_{k}^{s})\|_{2}^{2}\right]+\left(c_{k\boldsymbol{+}1}+c_{k\boldsymbol{+}1}\beta_{k}\eta_{k}\right)\mathbb{E}\left[\|\mathbf{x}_{k}^{s}-\mathbf{x}_{0}^{s}\|_{2}^{2}\right]  \\
        & +\left(\frac{L}{2}\eta_k^2+c_{k+1}\eta_k^2\right)\frac{12L^2\delta_n}{b}\left(1+\frac{4d}{q}\right)\alpha^2\mathbb{E}\left[\|\mathbf{x}_k^s-\mathbf{x}_0^s\|_2^2\right] \\
         & +4\left(\frac{L}{2}\eta_k^2+c_{k+1}\eta_k^2\right)\left(\left(2+\frac{24d\delta_n}{qb}\right)\alpha^2-2\alpha+1\right)\mathbb{E}\left[\|\nabla f(\mathbf{x}_k^s)\|_2^2\right] \\
         & +\left(\frac{L}{2}\eta_k^2+c_{k+1}\eta_k^2\right)\frac{3\delta_n}{b}\left(1+\frac{2}{q}\right)L^2d^2\mu^2\alpha^2 \\
         & +\left(\frac{L}{2}\eta_k^2+c_{k+1}\eta_k^2\right)\frac{4\sigma^2}{b}\left(\frac{24d\delta_n\alpha^2}{q}+(1-\alpha)^2\right)
         \label{eq:multi_points_lyapunov}
        \end{aligned}
    \end{equation}
    Based on the definition of $c_{k}=\left(1+\beta_{k}\eta_{k}\right)c_{k\boldsymbol{+}1}+\left(\frac{L}{2}+c_{k+1}\right)\left(1+\frac{4d}{q}\right)\frac{12L^2\delta_n\eta_k^2\alpha^2}{b}$ and $R_k^s$ given by \eqref{eq:ly_func}, we can simplify \eqref{eq:multi_points_lyapunov} to
    \begin{equation}
        \begin{aligned}
             R_{k+1}^{s} &\leq R_k^s\boldsymbol{-}\left(\eta_{k}-\frac{c_{k\boldsymbol{+}1}\eta_{k}}{\beta_{k}}\right)\mathbb{E}\left[\|\nabla f_{}(\mathbf{x}_{k}^{s})\|_{2}^{2}\right] \\
         & +4\left(\frac{L}{2}\eta_k^2+c_{k+1}\eta_k^2\right)\left(\left(2+\frac{24d\delta_n}{qb}\right)\alpha^2-2\alpha+1\right)\mathbb{E}\left[\|\nabla f(\mathbf{x}_k^s)\|_2^2\right] \\
         & +\left(\frac{L}{2}\eta_k^2+c_{k+1}\eta_k^2\right)\frac{3\delta_n}{b}\left(1+\frac{2}{q}\right)L^2d^2\mu^2\alpha^2 \\
         & +\left(\frac{L}{2}\eta_k^2+c_{k+1}\eta_k^2\right)\frac{4\sigma^2}{b}\left(\frac{24d\delta_n\alpha^2}{q}+(1-\alpha)^2\right)  \\
         &\leq R_k^s-\gamma_k\mathbb{E}\left[\|\nabla f_{}(\mathbf{x}_{k}^{s})\|_{2}^{2}\right]+\chi_k,
         \label{eq:simplify_rk}
        \end{aligned}
    \end{equation}
    where $\gamma_k$ and $\chi_k$ are defined coefficients in Theorem~\ref{theorem:parameterized}. \\
    Based on \eqref{eq:simplify_rk} and the following argument in, we can achieve 
    \begin{equation}
        \begin{aligned}
              \mathbb{E}[\|\nabla f(\bar{\mathbf{x}})\|_2^2]\leq\frac{\mathbb{E}[f(\bar{\mathbf{x}}_0)-f^*]}{T\bar{\gamma}}+\frac{S\chi_m}{T\bar{\gamma}}.
        \end{aligned}
    \end{equation}
    The rest of the proof is similar to the proof of Corollary~\ref{cor:convergence-rate} with the added complexity of the parameter $q$.\\
    Let $\theta=\beta\eta_k+\left(1+\frac{4d}{q}\right)\frac{12L^2\delta_n\alpha^2}{b}\eta_k^2$, and $c_k=c_{k+1}(1+\theta)+\left(1+\frac{4d}{q}\right)\frac{6L^3\delta_n\eta_k^2\alpha^2}{b}$. This leads to:
    \begin{equation}
        \begin{aligned}
                c_0=\left(1+\frac{4d}{q}\right)\frac{6L^3\delta_n\eta^2\alpha^2}{b}\frac{(1+\theta)^m-1}{\theta}
        \label{eq:c0_4}
        \end{aligned}
    \end{equation}
    Let $\eta=\frac{\rho}{L}$, $\alpha=\frac{q}{d}$, $\beta=L$, and $q\leq d$ we have:
    \begin{equation}
        \begin{aligned}
             \theta = \rho +\left(q+4d\right)\frac{12\delta_n q\rho^2}{bd^2} \leq \rho +12\rho\left(\frac{q^2}{d^2}+4\frac{q}{d}\right)\leq \rho+\frac{60\rho q}{d}
        \label{eq:theta_2}
        \end{aligned}
    \end{equation}
    Substituting \eqref{eq:theta_2} into \eqref{eq:c0_4}, we have:
    \begin{equation}
        \begin{aligned}
             c_k \leq c_0 &= \left(1+\frac{4d}{q}\right)\frac{6L^3\delta_n\eta^2\alpha^2}{b}\frac{(1+\theta)^m-1}{\theta}  \\
        &= \frac{6(q+4d)L\delta_n\rho q}{bd^2+12(q+4d)\delta_n\rho q}\left[(1+\theta)^m-1\right]  \\
        &\leq \frac{6(q+4d)L\delta_n\rho q}{bd^2}\left[(1+\theta)^m-1\right]  \\
        & \leq \frac{30L \delta_n \rho q}{bd}\left(e-1\right) = \frac{60L\delta_n\rho q}{bd},
        \label{eq:c0_bound2}
        \end{aligned}
    \end{equation}
    where the second inequality holds since $q\leq d$, and the first inequality holds if $m=\lceil\frac{1}{\rho+\frac{108\rho q}{d}}\rceil$\\
    Because we define $\bar{\gamma}=\min_k \gamma_k$, we have
    \begin{equation}
        \begin{aligned}
            \bar{\gamma}\geq\eta-\frac{c_0\eta}{\beta}-4\left(\frac{L}{2}\eta^2+c_{0}\eta^2\right)\left(\left(2+\frac{24d\delta_n}{qb}\right)\alpha^2-2\alpha+1\right)
        \label{eq:gamma_bound00}
        \end{aligned}
    \end{equation}
    From \eqref{eq:c0_bound2}, we have,
    \begin{equation}
        \begin{aligned}
             \frac{c_0}{\beta}\leq \frac{60\rho q}{bd}
        \label{eq:gamma_bound01}
        \end{aligned}
    \end{equation}
    Because $\eta=\frac{\rho}{L}$, $\alpha=\frac{q}{d}$, and $q\leq d$ we have
    \begin{equation}
        \begin{aligned}
             &\left(\frac{L}{2}\eta+c_{0}\eta\right)\left(\left(2+\frac{24d\delta_n}{qb}\right)\alpha^2-2\alpha+1\right)\\
                &\leq\left(\frac{\rho}{2}+\frac{60\rho^2 q}{bd}\right)\left(\frac{2q^2}{d^2}+\frac{24q}{bd}-\frac{2q}{d}+1\right)  \\
                & \leq \rho \left(\frac{24q}{bd}+1\right) \leq \rho\left(\frac{24}{b}+1\right)
                \label{eq:gamma_bound02}
        \end{aligned}
    \end{equation}
    The second inequality holds if we let $\rho\leq \frac{1}{120}$
    Substituting \eqref{eq:gamma_bound02} and \eqref{eq:gamma_bound01} into \eqref{eq:gamma_bound00}, we can get
    \begin{equation}
        \begin{aligned}
              \bar{\gamma} \geq \eta\left(1-\frac{60\rho q}{bd}-4\rho\left(\frac{24}{b}+1\right)\right)=\eta z_0,\label{eq:gamma_bound03}
        \end{aligned}
    \end{equation}
    where $z_0>0$, and $\bar{\gamma}$ is a universal constant that is independent of $T$, $b$ and $d$.\\
    Then, we bound $\chi_m=\sum_k \chi_k$
    \begin{equation}
        \begin{aligned}
             \chi_m&\leq m\left(\frac{L}{2}\eta^2+c_{0}\eta^2\right)\frac{3\delta_n}{b}\left(1+\frac{2}{q}\right)L^2d^2\mu^2\alpha^2  \\
        &+ m\left(\frac{L}{2}\eta^2+c_{0}\eta^2\right)\frac{4\sigma^2}{b}\left(\frac{24d\delta_n\alpha^2}{q}+(1-\alpha)^2\right) \\
        \end{aligned}
    \end{equation}
    Because $c_0\leq\frac{60L\rho q }{bd}\leq \frac{L}{2}$ if $\rho\leq\frac{1}{120}$, this yields
    \begin{equation}
        \begin{aligned}
             \frac{\chi_m}{\bar{\gamma}}&\leq \frac{\rho}{z_0}\frac{3\delta_n}{b}\left(1+\frac{2}{q}\right)L^2\mu^2 q^2  \\
        &+ \frac{\rho}{z_0}\frac{4\sigma^2}{b}\left(\frac{24q\delta_n}{d}+(1-\frac{q}{d})^2\right) \\
        \end{aligned}
    \end{equation}
    Since $T=Sm$ and $\mu=\frac{1}{q\sqrt{T}}$, we have
    \begin{equation}
        \begin{aligned}
             \frac{S\chi_m}{T\bar{\gamma}}&\leq \frac{\rho}{z_0}\frac{3\delta_n}{b}\left(1+\frac{2}{q}\right)\frac{L^2}{T} \\
        &+ \frac{\rho}{z_0}\frac{4\sigma^2}{b}\left(\frac{24q\delta_n}{d}+(1-\frac{q}{d})^2\right)
         \\
        &\leq O\left(\frac{1}{bT}+\frac{1}{b}\left(1-\frac{q}{d}\right)^2\right) \label{eq:big_o_chi}
        \end{aligned}
    \end{equation}
    Substituting \eqref{eq:gamma_bound03} and \eqref{eq:big_o_chi} into \eqref{eq:main_bound}, we have
    \begin{equation}
        \begin{aligned}
              \mathbb{E}[\|\nabla f(\bar{\mathbf{x}})\|_2^2]\leq\frac{\mathbb{E}[f(\bar{\mathbf{x}}_0)-f^*]}{Tz_0}\frac{L}{\rho}+\frac{S\chi_m}{T\bar{\gamma}}=O\left(\frac{1}{T}+\frac{1}{bT}+\frac{1}{b}\left(1-\frac{q}{d}\right)^2\right)
        \end{aligned}
    \end{equation}
\end{proof}
\subsection{Auxiliary Lemmas}
\label{appendix:auxiliary_lemmas}
\begin{lemma}
\label{lem:lemma3}
    Let $\{\mathbf{z}_i\}^n_{i=1}$ be a sequence of n vectors. Let $\mathcal{I}$ be a mini-batch of size b, which contains i.i.d. samples selected uniformly randomly (with replacement) from $[n]$.
    \begin{equation}
        \begin{aligned}
            \mathbb{E}_\mathcal{I}\left[\frac{1}{b}\sum_{i\in\mathcal{I}}\mathbf{z}_i\right]=\frac{1}{n}\sum_{j=1}^n\mathbf{z}_j.
        \end{aligned}
    \end{equation}
    When $\sum_{i=1}^n\mathbf{z}_i=\mathbf{0}$, then
    \begin{equation}
        \begin{aligned}
             \mathbb{E}_\mathcal{I}\left[\left\|\frac{1}{b}\sum_{i\in\mathcal{I}}\mathbf{z}_i\right\|_2^2\right]=\frac{1}{bn}\sum_{i=1}^n\|\mathbf{z}_i\|_2^2.
        \end{aligned}
    \end{equation}
\end{lemma}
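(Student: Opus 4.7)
Since $\mathcal{I}$ consists of $b$ i.i.d.\ samples with replacement from $[n]$, I would index the mini-batch as $i_1,\dots,i_b$, where each $i_j$ is uniformly distributed on $[n]$ and the $\{i_j\}$ are mutually independent. For the first identity, the plan is to apply linearity of expectation, observing that $\mathbb{E}[\mathbf{z}_{i_j}] = \tfrac{1}{n}\sum_{k=1}^{n}\mathbf{z}_k$ for every fixed $j$, so that averaging $b$ such terms gives $\tfrac{1}{n}\sum_{k=1}^n\mathbf{z}_k$. This step is essentially a one-line computation.

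For the second identity I would expand the squared norm as a double sum
\begin{equation*}
\Bigl\|\tfrac{1}{b}\sum_{j=1}^{b}\mathbf{z}_{i_j}\Bigr\|_2^2
=\tfrac{1}{b^2}\sum_{j=1}^{b}\sum_{k=1}^{b}\langle \mathbf{z}_{i_j},\mathbf{z}_{i_k}\rangle,
\end{equation*}
then split the expectation into diagonal terms ($j=k$) and off-diagonal terms ($j\ne k$). The diagonal contributes $b$ identical copies of $\mathbb{E}[\|\mathbf{z}_{i_1}\|_2^2]=\tfrac{1}{n}\sum_{i=1}^{n}\|\mathbf{z}_i\|_2^2$. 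For off-diagonal pairs, independence of $i_j$ and $i_k$ gives $\mathbb{E}[\langle\mathbf{z}_{i_j},\mathbf{z}_{i_k}\rangle] = \bigl\langle \tfrac{1}{n}\sum_i \mathbf{z}_i,\tfrac{1}{n}\sum_i \mathbf{z}_i\bigr\rangle$, which vanishes by the hypothesis $\sum_{i=1}^n \mathbf{z}_i=\mathbf{0}$. Collecting the surviving diagonal contribution and dividing by $b^2$ yields the claimed $\tfrac{1}{bn}\sum_{i=1}^n\|\mathbf{z}_i\|_2^2$.

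The main (mild) subtlety is keeping the sampling model straight: because we sample \emph{with replacement}, the $i_j$'s are genuinely independent, which is exactly what makes the cross-terms factor cleanly into a product of expectations; without replacement one would pick up a correction factor and the zero-sum hypothesis would not eliminate the cross-terms in the same way. Beyond this bookkeeping point, no serious obstacles are expected, as both identities reduce to linearity of expectation and pairwise independence.
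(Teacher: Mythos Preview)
Your argument is correct and is exactly the standard proof: linearity of expectation for the first identity, and for the second, expanding the squared norm into diagonal and off-diagonal inner products, using i.i.d.\ sampling to factor the cross-terms, and killing them via the zero-sum hypothesis. The paper itself does not give a self-contained proof of this lemma; it simply cites Lemma~4 of \cite{liu2018zeroth}, whose proof follows the same diagonal/off-diagonal decomposition you outline. So your proposal matches the referenced argument, with the added benefit that you have spelled it out explicitly rather than deferring to the citation.
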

\begin{proof}
    See the proof of Lemma~4 in \citep{liu2018zeroth}.
\end{proof}

\begin{lemma}
\label{lem:lemma4}
    Let $\{\mathbf{z}_i\}^n_{i=1}$ be a sequence of n vectors. Let $\mathcal{I}$ be a uniform random mini-batch of $[n]$ with size b (no replacement in samples). Then
    \begin{equation}
        \begin{aligned}
             \mathbb{E}_\mathcal{I}\left[\frac{1}{b}\sum_{i\in\mathcal{I}}\mathbf{z}_i\right]=\frac{1}{n}\sum_{j=1}^n\mathbf{z}_j.
        \end{aligned}
    \end{equation}
    When $\sum_{i=1}^n\mathbf{z}_i=\mathbf{0}$, then
    \begin{equation}
        \begin{aligned}
            \mathbb{E}_\mathcal{I}\left[\left\|\frac{1}{b}\sum_{i\in\mathcal{I}}\mathbf{z}_i\right\|_2^2\right]=\frac{\mathcal{I}(b<n)}{bn}\sum_{i=1}^n\|\mathbf{z}_i\|_2^2.
        \end{aligned}
    \end{equation}
    where $I$ is an indicator function, which is equal to 1 if $b < n$ and 0 if $b = n$.
\end{lemma}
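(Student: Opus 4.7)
The plan is to prove both identities directly from the uniform-subset distribution of $\mathcal{I}$, exploiting exchangeability. Viewing $\mathcal{I}$ as a uniformly random $b$-element subset of $[n]$, one finds by symmetry that each index $i$ has marginal inclusion probability $\Pr(i \in \mathcal{I}) = b/n$, and each ordered pair $i \neq j$ has joint inclusion probability $\Pr(i,j \in \mathcal{I}) = b(b-1)/[n(n-1)]$ whenever $b < n$; in the boundary case $b = n$, $\mathcal{I} = [n]$ deterministically.

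For the first claim, I would simply write the mini-batch sum using inclusion indicators and apply linearity of expectation:
$$\mathbb{E}_\mathcal{I}\left[\frac{1}{b}\sum_{i \in \mathcal{I}} \mathbf{z}_i\right] = \frac{1}{b}\sum_{i=1}^n \mathbf{z}_i \, \Pr(i \in \mathcal{I}) = \frac{1}{n}\sum_{i=1}^n \mathbf{z}_i.$$

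For the second claim, I would expand the squared norm into a double sum and split it into diagonal ($i=j$) and off-diagonal ($i\neq j$) parts:
$$\mathbb{E}_\mathcal{I}\left[\left\|\frac{1}{b}\sum_{i \in \mathcal{I}} \mathbf{z}_i\right\|_2^2\right] = \frac{1}{b^2}\left[\frac{b}{n}\sum_{i=1}^n \|\mathbf{z}_i\|_2^2 + \frac{b(b-1)}{n(n-1)} \sum_{i \neq j} \langle \mathbf{z}_i, \mathbf{z}_j\rangle\right].$$
Then I would invoke the zero-sum hypothesis: from $0 = \|\sum_i \mathbf{z}_i\|_2^2 = \sum_i \|\mathbf{z}_i\|_2^2 + \sum_{i \neq j}\langle \mathbf{z}_i, \mathbf{z}_j\rangle$, the off-diagonal sum equals $-\sum_i \|\mathbf{z}_i\|_2^2$. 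Substituting collapses the cross terms back onto the diagonal and yields the finite-population-correction expression $\frac{n-b}{n-1} \cdot \frac{1}{bn}\sum_i \|\mathbf{z}_i\|_2^2$.

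The main subtle point, and the place that needs careful interpretation, is reconciling this exact expression with the indicator $I(b<n)$ appearing in the stated formula. The true prefactor $(n-b)/(n-1)$ vanishes precisely when $b=n$ (consistent with $\mathcal{I} = [n]$ making the mini-batch mean deterministic and thus variance-free) and lies in $(0,1]$ whenever $b < n$. Thus the claimed equality is really the tight upper bound $(n-b)/(n-1) \leq I(b<n)$, and it is exactly in this upper-bound form that the lemma is invoked in \eqref{eq:first_term_vk} of Proposition~\ref{prop:proposition1}. No deeper analytic difficulty arises; the argument is purely combinatorial, and the only care needed is in correctly deriving the joint inclusion probability $b(b-1)/[n(n-1)]$ for without-replacement sampling.
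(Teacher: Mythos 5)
Your proof is correct, and it is a genuinely self-contained argument where the paper gives none: the paper's ``proof'' is only a pointer to Lemma~A.1 of Lei et al.\ (2017). Your route --- inclusion indicators, the marginal probability $b/n$, the pairwise probability $b(b-1)/[n(n-1)]$, and the zero-sum identity $\sum_{i\neq j}\langle \mathbf{z}_i,\mathbf{z}_j\rangle = -\sum_i\|\mathbf{z}_i\|_2^2$ --- is the standard finite-population-variance computation, and the resulting exact constant $\tfrac{n-b}{n-1}\cdot\tfrac{1}{bn}$ is right. Your most valuable observation is that the lemma as stated, with an equality and the prefactor $I(b<n)/(bn)$, is not literally true: the exact prefactor is $\tfrac{n-b}{(n-1)bn}$, which is strictly smaller than $\tfrac{1}{bn}$ for $1<b<n$, so the displayed relation should be ``$\leq$'' rather than ``$=$'' (and indeed the cited source states it as an inequality). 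You are also right that this does no harm downstream: the lemma is invoked only as an upper bound in \eqref{eq:first_term_vk} inside the proof of Proposition~\ref{prop:proposition1}, where $\delta_n = I(b<n)$ absorbs exactly this bound. The one cosmetic caveat is that in a final write-up you should state the conclusion as the inequality you actually proved (or report the exact constant), rather than leaving the reconciliation with $I(b<n)$ as a remark.
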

\begin{proof}
    See the proof of Lemma~A.1 in \citep{lei2017non}.
\end{proof}
\begin{lemma}
\label{lem:lemma5}
    For variables $\{\mathbf{z}_i\}^n_{i=1}$, we have
    \begin{equation}
        \begin{aligned}
             \left\|\sum_{i=1}^n\mathbf{z}_i\right\|_2^2\leq n\sum_{i=1}^n\|\mathbf{z}_i\|_2^2.
        \end{aligned}
    \end{equation}
\end{lemma}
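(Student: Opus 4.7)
The plan is to prove this elementary inequality via the convexity of the squared Euclidean norm, which gives the result immediately through Jensen's inequality. An equivalent route goes through the vector-valued Cauchy-Schwarz inequality, and I will outline both since each takes only a few lines.

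First, I would apply Jensen's inequality to the convex function $\phi(\mathbf{z}) = \|\mathbf{z}\|_2^2$ with the uniform distribution over $\{\mathbf{z}_1, \dots, \mathbf{z}_n\}$. This yields
\begin{equation*}
\left\|\frac{1}{n}\sum_{i=1}^n \mathbf{z}_i\right\|_2^2 \;\leq\; \frac{1}{n}\sum_{i=1}^n \|\mathbf{z}_i\|_2^2.
\end{equation*}
Multiplying both sides by $n^2$ produces the claimed bound $\left\|\sum_{i=1}^n \mathbf{z}_i\right\|_2^2 \leq n \sum_{i=1}^n \|\mathbf{z}_i\|_2^2$.

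As an alternative (and as a sanity check that the constant $n$ is tight up to equality when all $\mathbf{z}_i$ coincide), I would expand the left-hand side as $\sum_{i,j} \langle \mathbf{z}_i, \mathbf{z}_j\rangle$ and bound each cross term by $\langle \mathbf{z}_i, \mathbf{z}_j\rangle \leq \tfrac{1}{2}(\|\mathbf{z}_i\|_2^2 + \|\mathbf{z}_j\|_2^2)$ via the AM-GM / Cauchy-Schwarz inequality. Summing over $i,j \in [n]$ and collecting terms symmetrically gives exactly $n\sum_i \|\mathbf{z}_i\|_2^2$.

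There is no real obstacle here: the only subtlety is picking whichever of these two standard arguments fits the paper's style, and confirming equality holds when the $\mathbf{z}_i$ are identical (showing the constant $n$ cannot be improved in general). I would present the Jensen-based derivation as the main proof, since it is the shortest and mirrors how the lemma is typically invoked in the telescoping and variance-decomposition steps earlier in the appendix.
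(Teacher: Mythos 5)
Your proof is correct: the Jensen argument (and equally the expansion into cross terms bounded by $\langle \mathbf{z}_i,\mathbf{z}_j\rangle \leq \tfrac{1}{2}(\|\mathbf{z}_i\|_2^2+\|\mathbf{z}_j\|_2^2)$) is the standard derivation of this inequality, and your observation that equality holds when all $\mathbf{z}_i$ coincide confirms the constant $n$ is tight. The paper itself does not prove this lemma but merely cites Lemma~6 of Liu et al.\ (2018), whose proof is the same elementary Cauchy--Schwarz expansion, so your argument is fully consistent with the intended one and is in fact more self-contained.
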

\begin{proof}
    See the proof of Lemma~6 in \citep{liu2018zeroth}.
\end{proof}
\begin{lemma}
\label{lem:lemma6}
    if f is L-smooth, then for any $\mathbf{x},\mathbf{y}\in\mathbb{R}^d$
    \begin{equation}
        \begin{aligned}
             |f(\mathbf{x})-f(\mathbf{y})-\langle\nabla f_i(\mathbf{y}),\mathbf{x}-\mathbf{y}\rangle|\leq\frac{L}{2}\|\mathbf{x}-\mathbf{y}\|_2^2.
        \end{aligned}
    \end{equation}
   
\end{lemma}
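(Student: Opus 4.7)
The plan is to prove this standard quadratic upper bound (the ``descent lemma'') via the fundamental theorem of calculus applied along the line segment from $\mathbf{y}$ to $\mathbf{x}$. Define $\phi(t) = f(\mathbf{y} + t(\mathbf{x}-\mathbf{y}))$ for $t \in [0,1]$; then $\phi(1) - \phi(0) = \int_0^1 \phi'(t)\, dt$, which unpacks to
\begin{equation*}
f(\mathbf{x}) - f(\mathbf{y}) = \int_0^1 \langle \nabla f(\mathbf{y} + t(\mathbf{x}-\mathbf{y})),\, \mathbf{x}-\mathbf{y}\rangle\, dt.
\end{equation*}

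Next, I would subtract the linear term $\langle \nabla f(\mathbf{y}), \mathbf{x}-\mathbf{y}\rangle$ from both sides (noting it equals $\int_0^1 \langle \nabla f(\mathbf{y}), \mathbf{x}-\mathbf{y}\rangle\, dt$ since the integrand is constant in $t$) to obtain
\begin{equation*}
f(\mathbf{x}) - f(\mathbf{y}) - \langle \nabla f(\mathbf{y}), \mathbf{x}-\mathbf{y}\rangle = \int_0^1 \langle \nabla f(\mathbf{y}+t(\mathbf{x}-\mathbf{y})) - \nabla f(\mathbf{y}),\, \mathbf{x}-\mathbf{y}\rangle\, dt.
\end{equation*}
Taking absolute values, moving them inside the integral, and applying Cauchy--Schwarz inside the integrand bounds the right-hand side by $\int_0^1 \|\nabla f(\mathbf{y}+t(\mathbf{x}-\mathbf{y})) - \nabla f(\mathbf{y})\|_2 \cdot \|\mathbf{x}-\mathbf{y}\|_2\, dt$.

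Finally, I would invoke the $L$-smoothness hypothesis (Assumption~\ref{assump:smoothness}, which extends to $f$ itself) to control the gradient difference by $\|\nabla f(\mathbf{y}+t(\mathbf{x}-\mathbf{y})) - \nabla f(\mathbf{y})\|_2 \leq L \cdot t\|\mathbf{x}-\mathbf{y}\|_2$, reducing the bound to $L\|\mathbf{x}-\mathbf{y}\|_2^2 \int_0^1 t\, dt = \tfrac{L}{2}\|\mathbf{x}-\mathbf{y}\|_2^2$, which is exactly the claim. There is essentially no obstacle here: the argument is a textbook three-line calculation, and the only mild point to note is that the statement as written uses $\nabla f_i(\mathbf{y})$ in the inner product while referring to a generic $L$-smooth $f$, so the lemma should be read (and applied throughout the paper) with $\nabla f$ replaced consistently (equivalently, the identical argument works component-wise for each $f_i$ since each is $L$-smooth by Assumption~\ref{assump:smoothness}).
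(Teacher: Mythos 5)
Your proof is correct and is the standard fundamental-theorem-of-calculus argument for the descent lemma; the paper itself gives no proof here, deferring entirely to Lemma~A.2 of \cite{lei2017non}, which is established by exactly this integral-plus-Cauchy--Schwarz calculation, so you have simply supplied the omitted details. Your observation that the $\nabla f_i(\mathbf{y})$ in the statement should read $\nabla f(\mathbf{y})$ (or the lemma be stated per component $f_i$) is a genuine typo in the paper and the correct way to read it.
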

\begin{proof}
    This is a direct consequence of Lemma~A.2 in \citep{lei2017non}.
\end{proof}
\section{Analysis of Zeroth-Order Gradient Estimation Error}
\label{app:zo_error_bounds}

This section details bounds on the expected squared error of the ZO gradient estimators used in our work. We consider a ZO gradient estimator $\hat{\nabla}f_i(\mathbf{x})$ for a component function $f_i(\mathbf{x})$, which approximates the true gradient $\nabla f_i(\mathbf{x})$ with an estimation error $\omega_i(\mathbf{x})$, such that $\hat{\nabla}f_i(\mathbf{x}) = \nabla f_i(\mathbf{x}) + \omega_i(\mathbf{x})$. The characteristics of the expected squared error, $\mathbb{E}[\|\omega_i(\mathbf{x})\|_2^2]$, are presented below.

For the two-point ZO gradient estimator of $f_i(\mathbf{x})$, as defined in Equation~\eqref{eq:zo_two_point} in the main text, the expected squared error is bounded by:
\begin{equation}
\mathbb{E}[\|\omega_i(\mathbf{x})\|_2^2] \leq O(d) \|\nabla f_i(\mathbf{x})\|_2^2 + O(\mu^2 L^2 d^2).
\label{eq:appendix_zo_error_two_point}
\end{equation}
Here, $d$ is the problem dimension, $\mu$ is the smoothing parameter, and $L$ is the smoothness constant associated with $f_i$.

Subsequently, for the multi-point ZO gradient estimator of $f_i(\mathbf{x})$ using $2q$ query points, as defined in Equation~\eqref{eq:zo_multi_point} in the main text, the expected squared error is bounded by:
\begin{equation}
\mathbb{E}[\|\omega_i(\mathbf{x})\|_2^2] \leq O(d/q) \|\nabla f_i(\mathbf{x})\|_2^2 + O(\mu^2 L^2 d^2).
\label{eq:appendix_zo_error_multi_point}
\end{equation}
The detailed proofs for these bounds can be found in Proposition 2 of \citep{liu2018zeroth}.

\section{Experiment Setup}
\label{appendix:experiment_setup}
\subsection{Adaptability Experiment}
\label{appendix:subsec_nonconvex_ls}

The primary objective of this adaptability experiment was to empirically investigate the impact of the number of ZO query points ($q$) on the performance of VAMO and to validate the theoretical benefits of its multi-point ZO estimation strategy (see Section~\ref{sec:multi_point}). The optimization problem was a finite-sum non-convex least-squares objective: $f(\mathbf{x}) = \frac{1}{n} \sum_{i=1}^{n} (h(\mathbf{x}; \mathbf{z}_i) - y_i)^2$. We configured this synthetic task with $n=1000$ individual component functions and a parameter dimension of $d=100$. The function $h(\mathbf{x}; \cdot)$ was parameterized using a simple neural network with a non-convex activation function to ensure the overall non-convexity of the loss landscape.

In this setup, VAMO variants utilizing $q \in \{1, 3, 5\}$ query directions for the multi-point ZO gradient estimator were compared against the classical FO-SGD algorithm. A mini-batch size of $b=8$ was consistently applied across all methods. Learning rates for both VAMO (for each $q$ setting) and FO-SGD were individually tuned by selecting the best performing value from the range $[10^{-2}, 10^{-1}]$. For all VAMO variants, the ZO smoothing parameter was fixed at $\mu = 10^{-3}$. The mixing coefficient $\alpha$ for VAMO was also tuned for each value of $q$, guided by the theoretical insights on balancing FO and ZO information discussed in Appendix~\ref{appendix:analyse_alpha}. 

\subsection{MNIST Classification Task}
\label{appendix:subsec_mnist}
For the MNIST multi-class image classification task~\citep{lecun1998gradient}, we trained a Multi-Layer Perceptron (MLP) to evaluate VAMO against established baselines. The MLP architecture consisted of an input layer receiving flattened $28 \times 28$ pixel images (784 dimensions), followed by two hidden layers with 32 and 16 units respectively, both employing ReLU activation functions. The final output layer comprised 10 units corresponding to the digit classes, and the network was trained using a standard cross-entropy loss function. Images were normalized to the range $[0, 1]$.

Our proposed VAMO algorithm, configured with a single ZO query direction ($q=1$), was benchmarked against pure first-order FO-SGD~\citep{robbins1951stochastic} and pure zeroth-order methods, ZO-SGD and ZO-SVRG ~\citep{liu2018zeroth,ghadimi2013stochastic}. For all methods, the mini-batch size was set to $b=4$. The learning rates were independently tuned for each method, selected from the range $[10^{-4}, 10^{-3}]$ for the ZO methods, and $[10^{-3}, 10^{-2}]$ for FO methods and VAMO. For VAMO with $q=1$, we fixed the mixing coefficient at $\alpha = 0.1$ and used a ZO smoothing parameter of $\mu = 10^{-3}$. All models were trained for 10 epochs.

\subsection{Fine-Tuning Experiments}
\label{appendix:subsec_llm_finetuning}
To further examine VAMO in realistic large-scale scenarios, we conducted fine-tuning experiments on three language models under the MultiNLI (MNLI) dataset~\citep{williams2017broad} for a three-way natural language inference task. In all settings, the training and validation sets were subsampled to 256 and 128 examples, respectively, with a maximum input sequence length of 128 tokens. All models were fine-tuned for 1500 epochs with full precision (FP32) on a single NVIDIA RTX5880 49GB GPU. All experiments involved full-parameter fine-tuning. 

For fair comparisons, VAMO was compared against representative FO methods including FO-SGD~\citep{robbins1951stochastic}, FO-Adagrad~\citep{duchi2011adaptive} and FO-Adam~\citep{kingma2014adam}, and ZO methods including ZO-SGD~\citep{ghadimi2013stochastic} and ZO-SVRG~\citep{liu2018zeroth}. We set the batch size as 32. The learning rates of FO methods were tuned within $[10^{-4}, 10^{-3}]$, while those of ZO methods were tuned within $[10^{-6}, 10^{-5}]$.  For VAMO, the learning rate was also tuned in the range $[10^{-4}, 10^{-3}]$. We adopted the same learning rate schedule as illustrated in Algorithm~\ref{alg:lr_schedule}. For ZO methods, we adopted a smoothing parameter $\mu=10^{-3}$ and $q=1$ (two-point version). VAMO was configured with the the smoothing parameter $\mu=10^{-3}$, $q=1$, and inner loop length $m=10$. 

The main difference is in setting the mixing coefficient $\alpha$. $\alpha$ was set to $0.01$ for the GPT-2 experiments, while a lower value $\alpha=0.005$ was used for RoBERTa-Large and GPT-2 Medium. We used a smaller $\alpha$ for larger models because the ZO gradient estimates exhibit higher inherent error in higher-dimensional models, so reducing $\alpha$ limits the contribution of the noisier ZO correction and helps stabilize the training. A detailed analysis of $\alpha$ selection and the error of the ZO estimation is provided in Appendix~\ref{appendix:analyse_alpha} and Appendix~\ref{app:zo_error_bounds}.

\begin{algorithm}[h]
\caption{Learning Rate Scheduling for VAMO}
\label{alg:lr_schedule}
\begin{algorithmic}[1]
\STATE \textbf{Input:} Learning rates $\eta_1, \eta_2$, annealing factor $\delta_\eta$, losses $L$, annealing threshold $\kappa$, total number of batches in an epoch $w$
\STATE Compute moving averages:
\STATE $L_1 \gets \text{mean}(L[-w, :])$
\STATE $L_2 \gets \text{mean}(L[-2w, -w])$
\IF{$\frac{L_1}{L_2} > \kappa$}
    \STATE $\eta_1 \gets \frac{\eta_1}{\delta_\eta}, \quad \eta_2 \gets \frac{\eta_2}{\delta_\eta}$
\ENDIF
\STATE \textbf{Return:} updated $\eta_1, \eta_2$
\end{algorithmic}
\end{algorithm}

\section{Ablation on Key Hyperparameters}
\label{appendix:subsec_llm_finetuning}
In this section, we investigate the effect of two critical hyperparameters in VAMO: the number of inner loop iterations $m$ and the mixing coefficient $\alpha$. Both parameters play a central role in balancing the trade-offs of the algorithm—$m$ controls the frequency of full-batch ZO gradient corrections, while $\alpha$ adjusts the relative contribution of ZO-based variance reduction. To better understand their influence, we conduct ablation studies on fine-tuning GPT-2 Medium~\citep{radford2019language} under the MNLI dataset~\citep{williams2017broad}, examining both convergence speed and stability.

\subsection{Role of $m$ (Inner Loop Iterations)}

The parameter $m$ is a significant role in VAMO as it governs the frequency of the fullbatch ZO gradient updates. Specifically, a smaller $m$ means that the full-batch ZO gradient estimate is computed more frequently, which can lead to a more effective reduction in gradient variance. To better understand the trade-off between the effectiveness of these updates and their computational overhead, we perform an ablation study on $m$.

We consider the task of fine-tuning a GPT-2 model on the MNLI dataset . For this study, we fixed the other key hyperparameters ($\alpha = 0.1$, $q = 1$, and $\mu = 10^{-3}$) while varying $m$. The results, illustrated in Figure~\ref{fig:role_of_m_gpt2}, show that increasing the frequency of fullbatch ZO gradient update ($2 \le m \le 10$) generally enhances the performance of VAMO. However, this trend does not continue indefinitely; when a full batch update becomes too infrequent (e.g., $m \ge 20$), the variance correction becomes stale and less effective, leading to a performance degradation. This suggests an optimal range for $m$ that balances the benefits of frequent variance correction against its computational cost.

\subsection{Role of $\alpha$ (Mixing Coefficient)}  
\label{appendix:analyse_alpha}
\paragraph{Theoretical Analysis.} The mixing coefficient $\alpha$ in the VAMO update in ~\eqref{eq:fo-zo-svrg} critically balances the FO stochastic gradient $\nabla f_{\mathcal{I}}(\mathbf{x})$ against the ZO variance correction term $\hat{\nabla}f_{\mathcal{I}}(\hat{\mathbf{x}}) - \hat{\nabla}f(\hat{\mathbf{x}})$. The optimal choice for $\alpha$ directly depends on the estimation error $\omega_i$ inherent in the ZO gradient components ($\hat{\nabla}f_i(\mathbf{x}) = \nabla f_i(\mathbf{x}) + \omega_i$). As established in the literature \citep{liu2018zeroth,liu2020primer} and detailed in Appendix~\ref{app:zo_error_bounds}, the expected squared ZO error $\mathbb{E}[\|\omega_i\|_2^2]$ typically scales as $\mathcal{O}(d)$ for two-point estimates and $\mathcal{O}(d/q)$ for multi-point estimates using $q$ random directions. This relationship dictates that $\alpha$ should reflect the trustworthiness of the ZO estimates: when the ZO error is substantial (e.g., large $d$, small $q$), a smaller $\alpha$ is warranted to prevent amplifying this error. Conversely, when ZO estimates are more reliable (e.g., larger $q$ reducing error), a larger $\alpha$ can more aggressively leverage the variance reduction. This principled inverse relationship between ZO error magnitude (influenced by $d$ and $q$) and the appropriate scale of $\alpha$ is key. While specific forms like $\alpha \propto 1/d$ or $\alpha \propto q/d$ analyzed in our theoretical sections (e.g., Corollary~\ref{cor:convergence-rate} and Theorem~\ref{theorem:parameterized}) illustrate this adaptive trend, the core insight is that $\alpha$ must be adjusted to counterbalance the ZO estimator's error profile. Such adaptability enables VAMO to effectively navigate the trade-off between computational cost and convergence performance, a central aspect of its practical utility.

\paragraph{Empirical Results Analysis.} The mixing coefficient $\alpha$ controls how much weight VAMO assigns to the ZO-based variance reduction term relative to the FO gradient. A larger $\alpha$ strengthens the effect of variance correction but also amplifies the noise of the ZO estimator. To study this trade-off, we fine-tuned GPT-2 on the MNLI dataset, fixing other hyperparameters ($m=10$, $q=1$, $\mu=10^{-3}$) while varying $\alpha$. As shown in Figure~\ref{fig:role_of_alpha_gpt2}, smaller values (e.g. $\alpha=0.01$) provide a good balance, leading to fast convergence without being overly sensitive to noisy ZO gradients. However, if $\alpha$ is set too small (e.g., $\alpha=0.005$), the variance reduction term becomes underutilized, diminishing its benefit. In contrast, setting $\alpha$ too high (e.g., $\alpha=0.2$) magnifies estimation error, destabilizes training, and slows convergence. These results align with our theoretical analysis, which highlights the need for careful tuning of $\alpha$ in high-dimensional settings.

\begin{figure}[ht]
    \centering
    \begin{minipage}[b]{0.45\textwidth}
        \centering
        \includegraphics[width=\textwidth]{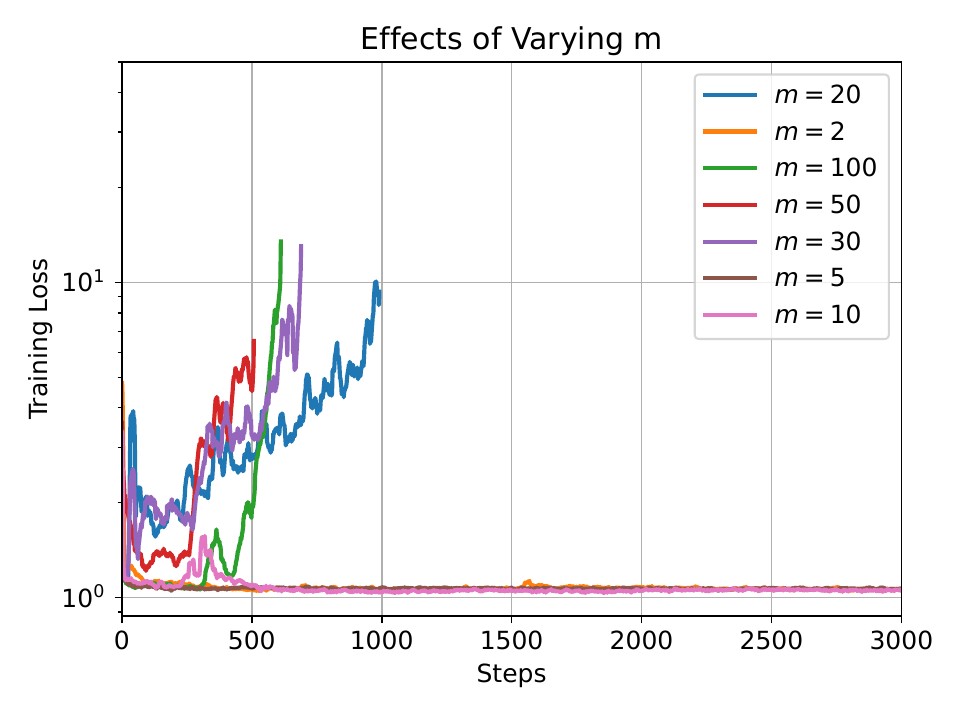} 
        \subcaption{}
        \label{fig:role_of_m_gpt2}
    \end{minipage}
    \hspace{0.05\textwidth}
    \begin{minipage}[b]{0.45\textwidth}
        \centering
        \includegraphics[width=\textwidth]{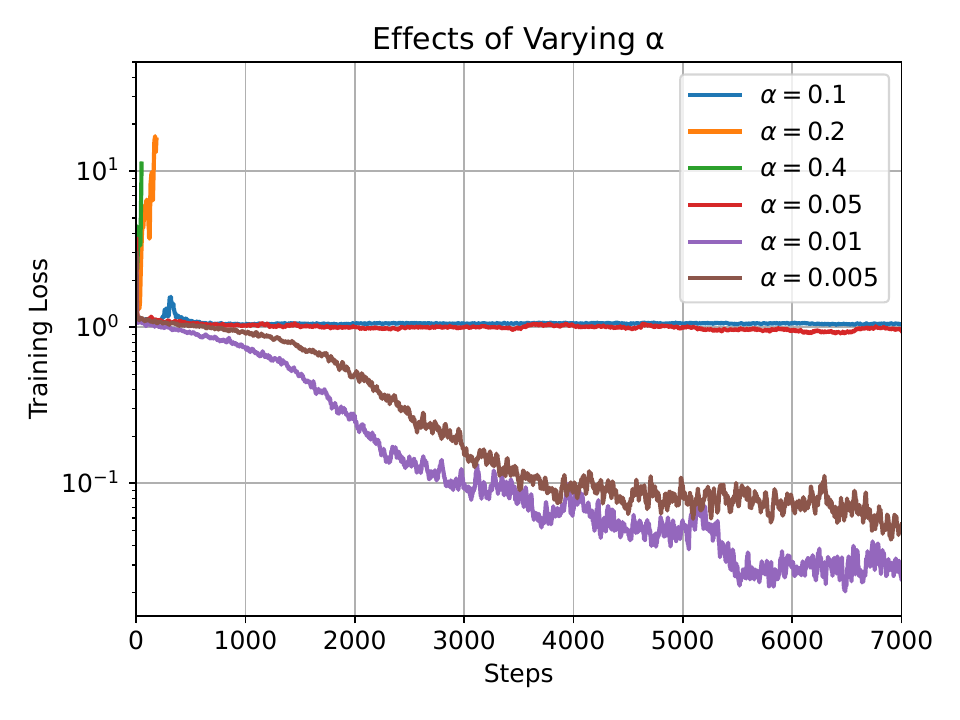}
        \subcaption{}
        \label{fig:role_of_alpha_gpt2}
    \end{minipage}
    \caption{The effects of inner loop iterations $m$ and mixing coefficient $\alpha$ on the performance of VAMO for fine-tuning GPT-2 on MNLI.}
    \label{fig:ablation_study}
\end{figure}

\end{document}